%
\documentclass{svproc}

\usepackage{graphicx}
\graphicspath{./images}
\usepackage{amsmath}
\usepackage{xcolor}
\usepackage{amsfonts}
\usepackage{balance}
\usepackage{wrapfig}
\usepackage{booktabs}
\usepackage{diagbox}
\usepackage[bookmarks=true, backref=page]{hyperref}
\usepackage[resetlabels]{multibib}
\newcites{App}{Appendix References}

\usepackage{url}

\usepackage{cite}
\usepackage{dsfont}
\usepackage{algorithm}
\usepackage{algpseudocode}
\usepackage{mathtools}

\usepackage{todonotes}

\usepackage{subcaption}
\usetikzlibrary{backgrounds}

\DeclareRobustCommand{\1}[1]{\text{\usefont{U}{bbold}{m}{n}1}{#1}}

\newcommand{\specialword}[1]{%
\textit{#1}%
}

\newcommand\policy{\pi}
\newcommand\opt{^*}
\newcommand\policyX{\pi^\mathcal{X}}
\newcommand\policyT{\pi^T}

%

%

\begin{document}
\title{Hierarchical Reinforcement Learning under Mixed Observability}
%
%
\author{Hai Nguyen\inst{1*} \and
Zhihan Yang\inst{2*} \and
Andrea Baisero\inst{1}, Xiao Ma\inst{3}, Robert Platt\inst{1\dagger} \and Christopher Amato\inst{1\dagger} \\ $*$\footnotesize{ Equal contribution \quad $\dagger$ Equal Advising}}
\authorrunning{Nguyen, Yang, Baisero, Ma, Platt, Amato}
%
\institute{Khoury College of Computer Sciences, Northeastern University, Boston, MA, USA \and
Calerton College, Northfield, MN, USA \and National University of Singapore, Singapore\\
\email{nguyen.hai1@northeastern.edu}}
\maketitle              
\begin{abstract}
The framework of mixed observable Markov decision processes (MOMDP) models many robotic domains in which some state variables are fully observable while others are not. In this work, we identify a significant subclass of MOMDPs defined by how actions influence the fully observable components of the state and how those, in turn, influence the partially observable components and the rewards. This unique property allows for a two-level hierarchical approach we call HIerarchical Reinforcement Learning under Mixed Observability (HILMO), which restricts partial observability to the top level while the bottom level remains fully observable, enabling higher learning efficiency. The top level produces desired goals to be reached by the bottom level until the task is solved. We further develop theoretical guarantees to show that our approach can achieve optimal and quasi-optimal behavior under mild assumptions. Empirical results on long-horizon continuous control tasks demonstrate the efficacy and efficiency of our approach in terms of improved success rate, sample efficiency, and wall-clock training time. We also deploy policies learned in simulation on a real robot.
\keywords{Robot Learning, Hierarchical, Mixed Observability}
\end{abstract}

\setlength\intextsep{0pt}

\section{Introduction}

Many robotic domains feature a state space that factorizes into high and low observability subspaces, in which actions primarily influence the high observability components of the state. For example, robot navigation with unknown dynamics and noisy sensors to an unknown dynamic target~\cite{chung2011search} (Fig.~\ref{fig:intro}a), or robot manipulation to reach an unknown target pose, \emph{e.g.}, find an object in cluttered and occluded environments~\cite{xiao2019online} (Fig.~\ref{fig:intro}b), grasp under uncertainty~\cite{hsiao2007grasping}, or collaborate with humans with partially observed human factors (\emph{e.g.}, trust~\cite{chen2018planning}, preferences~\cite{wang2016impact}, or goals~\cite{nikolaidis2017human}). In these examples, state variables such as a robot's position or an arm's pose can be measured with high accuracy (\emph{e.g.}, using GPS signals and sensors) than partially observable variables relative to the task and are often assumed fully observable. Moreover, actions directly influence the fully observable state components and indirectly affect partially observable ones. For example, in Fig.~\ref{fig:intro}, navigation actions can lead the mobile robot to different locations that contain useful information to reach the destination; sequences of poses help the robot arm open boxes to examine which contains the object.

\begin{figure}[t]
    \centering
    \includegraphics[width=0.85\linewidth]{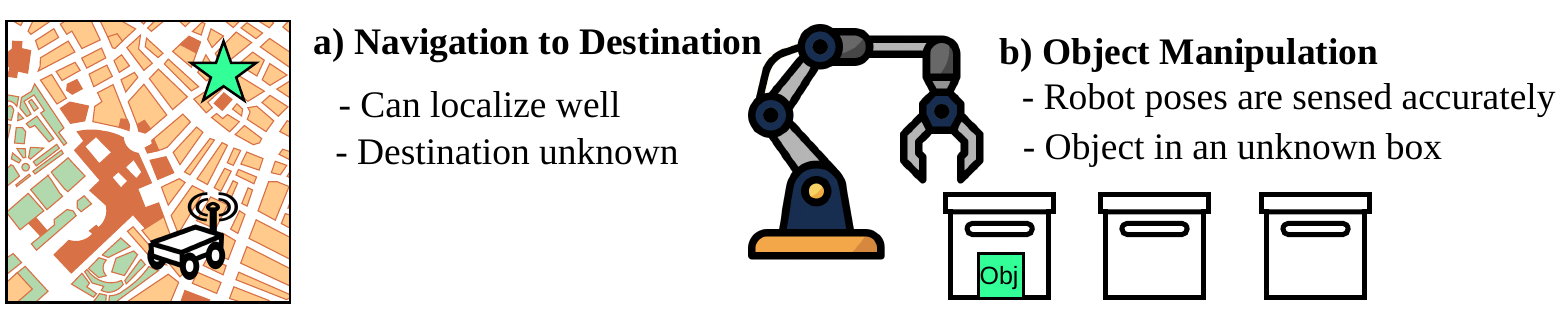}
    \caption{Examples of partially observable domains of our interests.}
    \label{fig:intro}
\end{figure}

Our contributions are as follows. First, we formulate such tasks using the framework of mixed observability Markov decision process (MOMDP) \cite{ong2009pomdps} and define motion-based MOMDPs (MOB-MOMDPs), a subclass of MOMDPs which makes mild assumptions concerning dynamics and tasks. Second, we introduce a hierarchical solution to solve MOB-MOMDPs, consisting of a high-level policy with partial observability and a low-level policy with full observability. In our agent, the top policy uses high-level observations to compute the bottom policy's goals, which are desired points in the fully observable space. When the bottom policy achieves a given goal or times out, emitted observations are used to produce the next high-level observation for selecting the next goal, and so on. Our approach can potentially offer efficient learning by breaking a long-horizon task into easier-to-learn subtasks, which enjoy full observability. Moreover, a hierarchical approach would explore more efficiently when rewards are sparse, thanks to high-level actions. In both theory and empirical experimentation, we show that our proposed hierarchical approach can achieve optimal or quasi-optimal behavior in MOB-MOMDPs with sufficiently low stochasticity constraints.

We demonstrate the benefits of our approach on long-horizon simulated continuous control domains with sparse rewards. Such domains are challenging for many non-hierarchical POMDP methods~\cite{han2019variational, Ma2020Discriminative}. In contrast, our hierarchical agent achieves higher success rates with excellent efficiency in training samples and wall-clock training time. Further, our robot experiments show that a learned policy could be effectively deployed in the real world.

\section{Background}
\label{sect:background}
In this section, we will first go through the background of a goal-conditioned MDP (which will be solved by the bottom level), then the frameworks of partially observable Markov decision processes (POMDP) and MOMDP. We conclude by the hierarchical reinforcement learning algorithm that our approach builds upon.

A \textbf{goal-conditioned MDP} is defined by a tuple $(\mathcal{S}, \mathcal{A}, \mathcal{G}, T, R, \gamma)$, where $S$ is the state space, $\mathcal{A}$ is the action space, $\mathcal{G}$ is the goal space, $T: \mathcal{S} \times \mathcal{A} \rightarrow \mathcal{S}$ is the transition function, $R: \mathcal{S} \times \mathcal{A} \times \mathcal{G} \rightarrow \mathbb{R}$ is the reward function, and $\gamma \in [0, 1)$ is the discount factor. The objective is to find a goal-conditioned policy $\policy\colon \mathcal{S}\times\mathcal{G}\to\Delta\mathcal{A}$ which maximizes the return $\mathbb{E}_\pi[\sum_{t=0}^{\infty} \gamma^t r_t]$, where $r_t$ is the reward at timestep $t$.

A \textbf{POMDP}~\cite{astrom1965optimal} is specified by a tuple
$(\mathcal{S}, \mathcal{A}, T, R, \Omega, O, \gamma)$, where $(\mathcal{S}$, $\mathcal{A}$, $T$, $R$, $\gamma)$ are the same as in a goal-conditioned MDP. Instead of directly observing the state $s$, the agent only observes $o \in \Omega$ after taking an action $a$ and reaching state $s'$ governed by the observation function $O (s', a, o) = p(o \mid  s', a)$.
The goal is to find a policy $\pi$ that maximizes the expected discounted return defined as
$\mathbb{E}_\pi[\sum_{t=0}^{\infty} \gamma^t r_t]$. 
To take an optimal action at timestep $t$, an agent often must condition its policy on the entire action-observation history $h_t = (o_{\leq t}, a_{<t}) \in \mathcal{H}_t$ that it has seen so far. However, the size of the history $\mathcal{H}_t$ grows exponentially with $t$. Therefore, a recurrent neural network (RNN) is often used to summarize $h_t$ with its fixed-sized hidden state.

A \textbf{MOMDP}~\cite{ong2009pomdps} is a POMDP in which a state can be decomposed as $s=(x,y)$, where $x\in\mathcal{X}$ is fully observable and $y\in\mathcal{Y}$ is partially observable. Since $x$ is fully observable, an observation $o$ can be decomposed as $o=(x, z) \in \Omega$ where $z$ is the remaining component of $o$. The observation function
\begin{align}
O(s', a, o) = p(o\mid s',a) = p(x, z\mid x', y',a) = \1{\left[x=x'\right]} p(z\mid x', y',a) \,
\end{align}
specifies which observation the agent gets after it took action $a$ and reached state $s'=(x',y')$ with $\1{}$ denoting the indicator function. The transition function $T(s, a, s') = p(s' \mid s, a) =  p(x', y'\mid x, y, a)$ for $x, x' \in \mathcal{X}$ and $y, y' \in \mathcal{Y}$ specifies the probabilities of reaching a state $(x',y')$ after taking an action $a$ in state $(x,y)$. $T(s, a, s')$ can be decomposed as 
\begin{align}
T(s,a,s') = p(s'\mid s, a) = p(x', y' \mid x, y, a) = p(x'\mid x,y,a)p(y'\mid x,y,a,x').
\label{eq:transition}
\end{align}
Let $T^\mathcal{X}(x,y,a,x') = p(x'\mid x,y,a)$ and $T^\mathcal{Y}(x,y,a,x',y') = p(y'\mid x,y,a,x')$, the tuple $(\mathcal{X}, \mathcal{Y}, \mathcal{A}, T^\mathcal{X}, T^\mathcal{Y}, R, \Omega, O, \gamma)$ formally defines a MOMDP.

\textbf{Hierarchical Actor-Critic (HAC)}~\cite{levy2017learning} is an MDP hierarchical agent, in which an action from a non-base level is a goal for the policy at the level right below it, and the base policy will directly interact with the environment. The policies in each level are trained in an off-policy manner using replay buffers, one for each level. We build our agent upon a two-level HAC agent and utilize different techniques in HAC to stabilize the training at the top level and learn effectively under sparse rewards at the bottom. More details are in Section \ref{sect:HILMO}.

\section{Motion-Based MOMDPs}
\label{sect:mob-momdp}

We define motion-based MOMDPs (MOB-MOMDPs) as MOMDPs which satisfy the following additional factorization and independence assumptions,
\begin{align}
p(s'\mid s, a) &= p(x'\mid x, a) p(y'\mid x, y, x') \,, \\
p(o\mid s', a) &= p(o\mid s') \,, \\
R(s, a) &= R(s) \,.
\end{align}
\noindent In other words, a) the fully observable component $x$ of the state satisfies the Markov property without depending on the partially observable component $y$ of the state, b) both the partially observable component $y$ of the state and the observed component $z$ are conditionally independent on the action $a$ (when conditioned on the fully observable component $x$ of the state), and c) the task is encoded by a reward function which exclusively depends on the reached states, and not the actions taken.  We refer to MOB-MOMDPs, which have deterministic (stochastic) $T^\mathcal{X}$ as \emph{deterministic} (\emph{stochastic}) MOB-MOMDPs;  note that this does not refer to the stochasticity of $T^\mathcal{Y}$, which remains unconstrained.

In MOB-MOMDPs, actions only have a direct influence on the resulting $x$ trajectories, while their influence on the $y$, $z$, and reward trajectories is indirect through $x$.  MOB-MOMDPs include (but are not limited to) navigation tasks where $x$ represents the fully observable pose of the agent in the environment, while $y$ represents other partially observable information about the environment and task.  In such navigation MOB-MOMDPs, actions relate to the \emph{motion} of the agent, and it is exclusively through such motion that the agent is able to interact with the environment, gather information, and complete the task.  Although not all MOB-MOMDPs intrinsically represent navigation tasks, we will use the imagery of navigation tasks as a useful analogy to simplify the way we discuss and analyze MOB-MOMDPs and the respective learning algorithms.  Therefore, we reinterpret general MOB-MOMDPs as navigation tasks where $x$ figuratively represents the agent's pose, $a$ the movements that allow the agent to change its pose, and $y$ as any other partially observable aspect concerning the task.

Using such analogy, and because actions exclusively influence the environment through the resulting agent pose, it is possible to abstract ``motion''-based control (\emph{i.e.}, based on the actions which move the agent) as ``pose''-based control (\emph{i.e.}, based on the poses which the agent should reach in order to gain information or complete the task).  ``Pose''-based control is executed not by choosing how the agent should move (action $a$), but rather where it should move to (pose $x'$).  Such abstraction is the inspiration for a flavor of hierarchical reinforcement learning specifically suited to solve MOB-MOMDPs.

\section{Hierarchical Reinforcement Learning under Mixed Observability}
\label{sect:HILMO}

We first give an overview of our hierarchical method, and how each hierarchy layer is trained.  Then we provide an optimality analysis of the approach.

\subsection{Approach}
\textbf{Overview.} 
As shown in Fig.~\ref{fig:method}, HIerarchical reinforcement Learning under Mixed Observability (HILMO) makes decisions through a two-level hierarchy. The top-level policy is a recurrent module (symbolized by an arrow pointing to itself) which takes in a top-level observation $o_t^T$ (a summary of several past primitive observations $o \in \Omega$) to produce a goal $x^g_t \in \mathcal{X}$ being the desired value for $x_t$. Then the memoryless bottom-level policy selects an action $a_t$ using $x_t$ (extracted from $o_t = (x_t, z_t)$) and $x^g_t$. Notice this also covers the case when we want to set the goal only for a subspace of $\mathcal{X}$. For instance, although $x$ of the mobile robot in Fig.~\ref{fig:intro} might include both positions and velocities, we might just want it to successfully reach a certain position regardless of its velocity. 

\begin{figure}[t]
    \centering
    \includegraphics[width=0.98\linewidth]{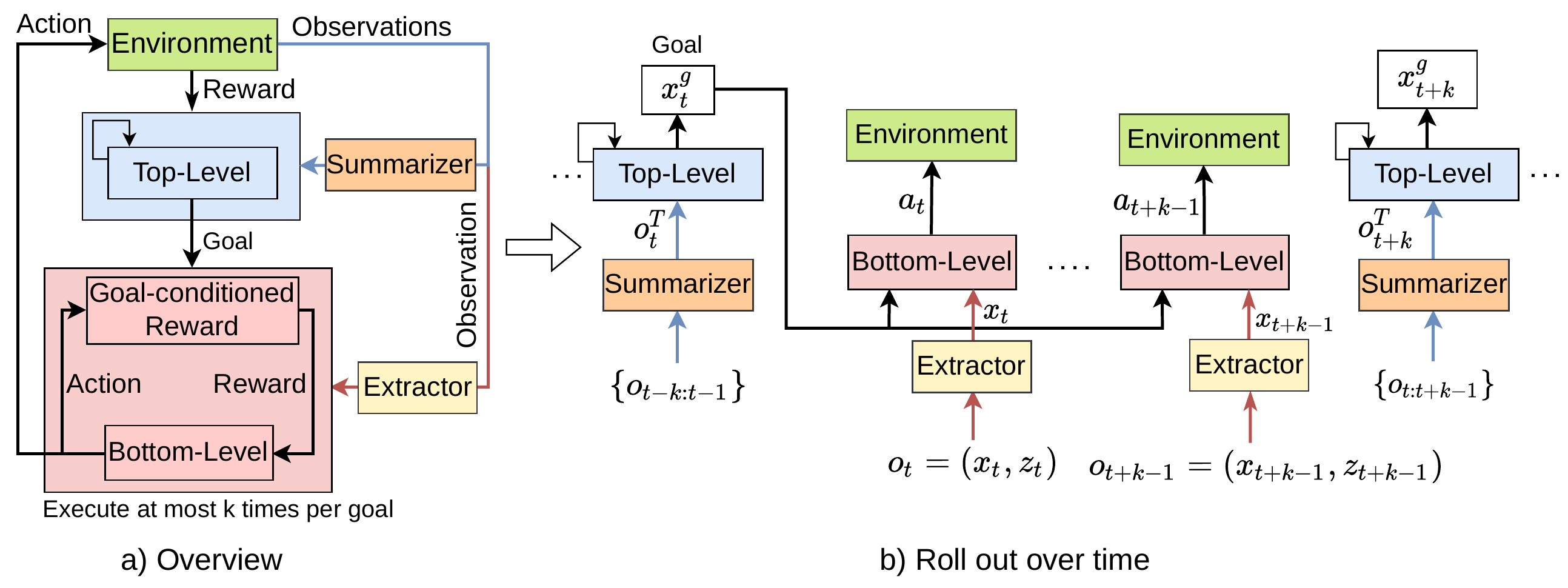}
    \caption{Our two-level hierarchical agent. A \textit{memory-based} top-level policy looks at a summary of several past observations to select a desired state (goal) $x^g_t$ for the bottom-level policy. A \textit{memoryless} bottom policy then looks at $x_t$ (a component of observation $o_t = (x_t, z_t)$) and the goal state $x^g_t$ to produce at most $k$ primitive actions to achieve $x^g_t$, emitting a new sequence of observations. Next, these observations will be fed to a summarizer to create a new high-level observation for the top policy to select a new goal. }
    \label{fig:method}
\end{figure}

The goal $x^g_t$ remains unchanged until $x^g_t$ is achieved or $k$ bottom-level actions have been performed (for clean notations, from now, we assume that the bottom episode will \textit{always} last $k$ timesteps). When the bottom level finishes acting, $k$ observations $o_{t:t+k-1}$ (\emph{i.e.}, $o_t, \dots, o_{t+k-1}$) are emitted. These observations will be fed to a summarizer to create the next top-level observation $o^T_{t+k}$ for the top policy to choose the next goal. In this hierarchy, the top level acts at a higher temporal resolution than the bottom level. For a complete algorithm of HILMO, please refer to Appendix \ref{app:HILMO}.

\textbf{Bottom-Level. } A bottom-level goal-conditioned MDP $\mathcal{M}^\mathcal{X}$ is specified by $(\mathcal{X}, \mathcal{A}, \mathcal{G} = \mathcal{X}, T^\mathcal{X}, R^\mathcal{X}, \gamma)$. The reward function $R^\mathcal{X}$ is defined for each goal $x^g \in \mathcal{X}$ as $R^\mathcal{X} (x',x^g) = -\1 [d(x', x^g) \geq \epsilon]$, where $d$ is some given distance metric in $\mathcal{X}$ and $\epsilon$ is a small reaching threshold. The transition function $T^\mathcal{X}(x,a,x') = p(x'\mid x,a)$ specifies the probabilities of reaching $x'$ after taking action $a$ in $x$. A goal-conditioned policy $\pi^\mathcal{X}(a\mid x, x^g)$ that solves $\mathcal{M}^\mathcal{X}$ will maximize the discounted cumulative reward $\sum_{t'=t}^{t+k-1} \gamma^{t'-t}R^\mathcal{X}(x_{t'}, x^g)$, where $x_t$ is the starting state. The input $x$ of $\pi^\mathcal{X}$ is from an extractor that extracts $x$ from $o=(x,z)$.

\textbf{Top-Level. }A top-level POMDP $\mathcal{P}^T$ is specified by $(\mathcal{S}, \mathcal{A}^T = \mathcal{X}, T^T, R^T, \Omega^T,\allowbreak O^T, \gamma)$. In particular, its action space $\mathcal{X}$ is the goal space in $\mathcal{M}^\mathcal{X}$, therefore a \textit{top-level} policy $\pi^T(\cdot\mid o^T)$ that solves $\mathcal{P}^T$ will output a desired state $x^g$ to be reached by $\pi^\mathcal{X}$. The top-level transition function can be specified as a \textit{multi-time model}~\cite{precup1997multi} that describes a multi-step policy taking in the goal $a^T$ from state $s$
\begin{equation}
T^T(s, a^T,s') = \sum_{m=1}^{k} p(s',m\mid s, a^T) \,,
\end{equation}
where $p(s',m\mid s, a^T)$ is the probability that the bottom policy $\pi^\mathcal{X}$ terminates at state $s'$ after exactly $m$ primitive actions when it acts to achieve the goal $a^T$ starting from state $s$. Unlike $\pi^\mathcal{X}$, the objective of $\pi^T$ is to optimize the discounted cumulative reward $\sum_{t=0;t+=k}^\infty \gamma^{t/k}R^T(s_{t}, a^T_{t})$ where each $R^T(s_t,a_t^T)$ is the expected accumulated environment rewards when $\pi^X$ acts for $k$ timesteps to achieve the goal $a_t^T$ starting at state $s_t$. Top-level observations are defined as the sequence of actions and observations obtained by the low-level policy until control is given back to the high-level policy, e.g., assuming that at time-step $t$ the high-level policy chooses goal $x^g=a^T$, and that the low-level policy interacts with the environment for $k$ timesteps by choosing actions $(a_t, \ldots, a_{t+k-1})$ and receiving observations $(o_t, \ldots, o_{t+k-1})$, then $o^T = (a_t, o_t, \ldots, a_{t+k-1}, o_{t+k-1})$.
%


\subsection{Bottom-level Policy Learning}

Because the bottom policy acts in a fully observable system with sparse rewards, we learn it using transitions sampled from a replay buffer using goal relabeling~\cite{andrychowicz2017hindsight}.

\vspace{5pt}
\noindent \textbf{Goal Relabeling} is a commonly used and powerful technique for learning under sparse rewards. We utilize the technique to replace unmet goals in past transitions with ones met in hindsight, creating positive learning signals as goals are met. Similarly, HAC uses goal relabeling to create hindsight goal transitions (HGT). Specifically, given a bottom-level transition $(x, a, x', r, x^g)$ that did not reach $x^g$ but reached $x'^g$ instead, a modified transition $(x, a, x', r', x'^g)$ where $r'$ is the new reward associated with $x'^g$ will be used for training. In contrast, if $x^g$ is reached when the transition ends, the original transition will be used. 

\vspace{5pt}
\noindent \textbf{Learning Algorithm.} Adopting HAC's choice, we use a version of DDPG \cite{lillicrap2015continuous} without target networks. We also experimented with other learning algorithms such as TD3 \cite{fujimoto2018addressing} and SAC~\cite{haarnoja2018soft}, but none outperformed DDPG. Please see Appendix \ref{app:ddpg} for more details about our implementation of DDPG.

\subsection{Top-Level Policy Learning}
\label{ref:top-learn}

The top policy is trained off-policy using samples from a replay buffer of top-level episodes. This differs from HAC's memoryless top level, which can be trained using transitions. Here we introduce methods to create top-level observations and forming training episodes before describing the learning algorithm.

\noindent \textbf{Creating Top-Level Observations. } In practice, the summarizer applies an operator $\mathcal{F}$ to $k$ low-level observations to create a high-level observation. In this perspective, a high-level observation can be considered as an implication of temporally extended perception. Here, we consider three options for $\mathcal{F}$.

\noindent \underline{Full.} Concatenating all $k$ previous observations to create a top-level observation. If the bottom policy finishes before $k$ timesteps, we simply pad zeros to have $k$ observations.

\noindent \underline{Final.} Using the final ($k$-th) observation as a top-level observation. This approach is commonly used in MDP hierarchical agents such as HAC and HIRO~\cite{HIRO}.

\noindent \underline{Recurrent.} Using a separate and learnable recurrent layer to summarize all $k$ previous observations with the final hidden state being a top-level observation.

\vspace{5pt}
\noindent\textbf{Creating Stationary Training Episodes. } Given a top-level episode in with any goal unachieved, using it directly to learn the top policy will cause non-stationarity. For instance, given the same $(o^T, a^T)$, the bottom policy can achieve other undesired goals, leading to multiple possibilities of $(o'^T, r^T)$. This makes learning $Q(o^T, a^T)$ at the top level non-stationary.

\noindent \underline{Creating Stationary Transitions:} Stationary transitions are created using hindsight action transitions (HAT)~\cite{levy2017learning}, which are modified transitions \textit{as if} the bottom policy already converges and always achieves its goals. Specifically, given a transition $(o^T, a^T_{\text{unmet}}, o'^{T}, r^T)$ with an unmet action (goal) $a^T_{\text{unmet}}$, the transition $(o^{T}, a^T_{\text{met-by-bottom}}, o'^{T}, r^T)$ will instead be used, where $a^T_{\text{met-by-bottom}}$ is the goal reached by the bottom policy.

\noindent \underline{Penalizing Unachieved Goals:} Producing unrealistic goals that are unreachable by the current bottom policy should be discouraged. Therefore, whenever the top policy produces an unachieved goal for the bottom policy, it will be penalized by a negative reward with some probability. Specifically, given a transition $(o^T, a^T_{\text{unmet}}, o'^T, r^T)$, the transition $(o^T, a^T_{\text{unmet}}, o'^T, -H^T)$ will be used, where $H^T$ is the time horizon of the top policy. These transitions are known as subgoal testing transitions (STT)~\cite{levy2017learning}.

\begin{wrapfigure}{R}{0.5\linewidth}
    \centering
    \includegraphics[width=0.95\linewidth]{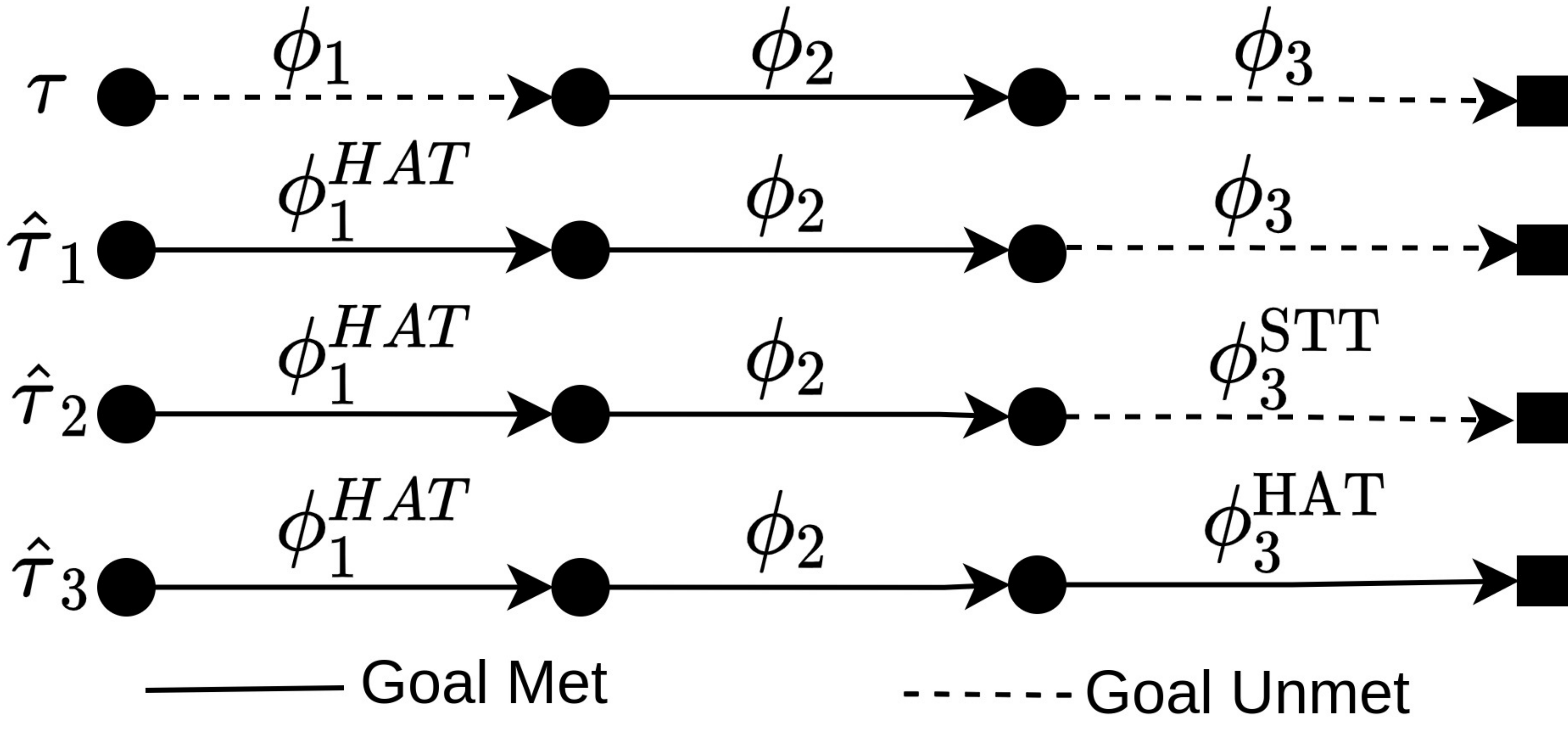}
    \caption{Creating training episodes for $\policyT$.}
    \label{fig:transition}
\end{wrapfigure}
\noindent \underline{Forming Stationary Episodes:} While HATs or STTs alone are sufficient to train a memoryless HAC agent, we must combine those transitions to form episodes for training our recurrent top policy. Fig.~\ref{fig:transition} illustrates a method transforming a non-stationary episode into its stationary versions. Here a top-level non-stationary episode $\tau$ comprises three transitions $\phi_1, \phi_2$, and $\phi_3$ in which a goal is only achieved (by the bottom policy) in transition $\phi_2$. We a) replace $\phi_1$ with $\phi_1^{\text{HAT}}$ to create a stationary episode $\hat{\tau}_1$, b) use HAT for $\phi_1$ and STT for $\phi_3$ to create episode $\hat{\tau}_2$, or c) use HAT for $\phi_1$ and $\phi_3$ to create episode $\hat{\tau}_3$. Finally, three stationary episodes $\hat{\tau}_1, \hat{\tau}_2$, and $\hat{\tau}_3$ are used to train $\pi^T$.

\noindent \textbf{Learning Algorithm.} We use Recurrent Deterministic Policy Gradient (RDPG) \cite{heess2015memory} to learn the top-level policy with an LSTM~\cite{hochreiter1997long} recurrent component. For the algorithm and the implementation details, see Appendix \ref{app:rdpg}. We also experiment with Gated Recurrent Unit (GRU) \cite{cho2014learning} and a recurrent version of TD3~\cite{fujimoto2018addressing} instead of RDPG (see Appendix \ref{app:design_choice} for a performance comparison).

\subsection{Optimality Analysis}

In this section, we analyze the optimality of the HILMO approach in MOB-MOMDPs, \emph{i.e.}, whether the best HILMO policies are also guaranteed to achieve optimal MOB-MOMDP behavior.  As it turns out, the amount of stochasticity of the MOB-MOMDP (as defined in Section~\ref{sect:mob-momdp}, \emph{i.e.}, only concerning $T^\mathcal{X}$) becomes a determining factor on the optimality of the HILMO approach.  Initially, we will assume \emph{deterministic} MOB-MOMDPs, and show that optimality is guaranteed.

\begin{theorem}\label{thm:optimality}
For \emph{deterministic} MOB-MOMDPs and sufficiently large $\gamma$, the optimal HILMO policies are optimal or quasi-optimal for the MOB-MOMDP.
\end{theorem}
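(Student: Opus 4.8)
The plan is to reduce optimal control of the MOB-MOMDP to optimal control of its fully observable pose trajectory, and then to show that HILMO paired with an optimal bottom level realizes, or closely approximates, this pose-level optimum. First I would exploit the MOB-MOMDP factorization to argue that the expected return of any policy depends only on the distribution over $x$-trajectories the policy induces as a function of the observation history, and \emph{not} on which particular actions realize those trajectories. Concretely, conditioning on a fixed $x$-trajectory $x_{0:t}$, the assumptions $p(y'\mid x,y,x')$, $p(o\mid s')$, and $R(s)$ imply that the resulting distribution over $y$-trajectories, over the observed components $z$, and over the reward sequence is independent of the actions taken. Hence any two policies inducing the same observation-conditioned distribution over $x$-trajectories attain the same value, so the optimal MOB-MOMDP value equals the supremum, over history-dependent maps from observations to next pose, of the expected return. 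This reframes the task as ``pose''-based control, exactly the abstraction motivating HILMO.

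Second, I would use \emph{determinism} of $T^\mathcal{X}$ to make this pose-level control exact. Writing $x' = f(x,a)$, the set of poses reachable from $x$ in at most $k$ primitive steps is well defined, and for small enough reaching threshold $\epsilon$ the bottom reward $R^\mathcal{X}(x',x^g) = -\1 [d(x',x^g) \geq \epsilon]$ makes the optimal bottom policy $\pi^{\mathcal{X}*}$ reach any such goal exactly and in the minimum number of steps, since fastest reaching maximizes the discounted sum of the per-step $-1$ penalties for every $\gamma \in [0,1)$. Consequently the top-level policy, by emitting a goal every $k$ steps, deterministically pins the pose at each block boundary to any value in the $k$-step reachable set, while the summarizer feeds the intervening observations back to $\pi^T$ so that goal selection may depend on all information gathered so far. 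Thus the HILMO class, combined with $\pi^{\mathcal{X}*}$, realizes precisely those observation-conditioned pose trajectories whose block endpoints are reachable and whose interiors are the shortest paths chosen by $\pi^{\mathcal{X}*}$.

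Finally I would compare this HILMO-realizable set against the unconstrained pose-control optimum from the first step. When the optimal pose trajectory can be traced through block endpoints --- for instance when $k=1$, or more generally when reward and information depend on the trajectory only through the per-block boundary poses --- the two sets coincide and HILMO is exactly optimal. The only residual gap arises from HILMO's coarser temporal resolution: within a block the interior poses are fixed by the shortest-path bottom policy rather than chosen freely, which may forgo some intermediate reward or observation. I would bound this gap and argue that taking $\gamma$ sufficiently large controls it, so that the discount schedules $\gamma^{t/k}$ at the top level and $\gamma^{t}$ in the environment align across full blocks and the long-horizon returns dominate within-block distortions, yielding quasi-optimality. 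Quantifying this temporal-abstraction gap and reconciling the two discount factors is the main obstacle of the proof.
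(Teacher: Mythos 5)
Your first two steps are sound, and step one actually formalizes something the paper only gestures at: by the MOB-MOMDP factorization, the conditional laws of $y$, $z$, and the rewards given the $x$-trajectory are action-independent, so the value of any policy is a functional of its history-conditioned distribution over pose trajectories. (This lemma also quietly repairs a gloss in the paper's own construction: if two histories sharing the same current pose $x$ and target $x'$ demand different optimal actions, your lemma shows any action realizing that pose transition is value-equivalent, so the memoryless $\pi^\mathcal{X}(x,x')$ is well defined.) The genuine gap is in your third step. You treat HILMO as re-planning only at fixed $k$-step block boundaries, which locks the interior of each block to a memoryless shortest-path controller that cannot react to observations gathered mid-block, and you then hope to bound the resulting loss by taking $\gamma$ large. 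That bound does not exist in general: the within-block loss is a loss of \emph{reactivity and information use}, not a tail-discounting loss, and it does not shrink as $\gamma \to 1$ (consider a task where the optimal action at step $t+1$ depends on the observation received at step $t$ inside a block, as in \texttt{Ant-Tag} with a moving opponent). Under your fixed-block reading, the claim would simply be false for $k > 1$, which is why you correctly flag this as ``the main obstacle'' --- it is not an obstacle that can be overcome by the discounting argument you propose.

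What rescues the theorem, and what the paper's proof actually does, is that HILMO's bottom-level episode terminates as soon as the goal is \emph{achieved}, with $k$ only an upper bound on its length. One can therefore construct the HILMO pair at the smallest temporal scale: given a deterministic optimal policy $\pi^*$, for each history $h$ with pose $x$ and action $a = \pi^*(h)$ inducing successor pose $x'$ (well defined by determinism of $T^\mathcal{X}$), set $\pi^T(h) = x'$ and $\pi^\mathcal{X}(x, x') = a$, completing $\pi^\mathcal{X}$ elsewhere with shortest paths. Each goal is then met in a single primitive step, control returns to the top level every timestep, and $\pi^\mathcal{X}(x, \pi^T(h)) = \pi^*(h)$ for all $h$, so the temporal-abstraction gap you identified vanishes identically rather than needing to be bounded. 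It remains only to check that no other HILMO policy is preferred under the two HILMO criteria (one-step paths are trivially shortest for the bottom criterion; the top criterion coincides with the MOB-MOMDP criterion at $\gamma = 1$), and the sole residual discrepancy is the $\gamma^{t/k}$ versus $\gamma^t$ bookkeeping in the formal top-level objective, which the paper dispatches --- admittedly informally --- as a small perturbation for $\gamma$ sufficiently close to $1$, not the within-block reward and information gap your plan would have had to quantify.
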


\begin{proof}
We first assume $\gamma=1$, and show by construction that optimal MOB-MOMDP policies can be represented as HILMO policies, and then show that no other HILMO policies are preferable according to the HILMO criteria.

Consider, without loss of generality, a deterministic optimal policy $\policy\opt$ for the \emph{deterministic} MOB-MOMDP.  Next, we show that we can construct a HILMO policy which exhibits the same optimal behavior.  Given any history $h$ and its associated fully observable pose $x$, the optimal policy selects action $a=\policy\opt(h)$ which causes a transition into pose $x'$.  Due to the deterministic assumptions on $T^\mathcal{X}$ and $\policy\opt$, each $h$ is associated with a unique resulting transition $(h, x, a, x')$.
Consider the HILMO policy $(\policyT, \policyX)$ constructed such that for each such tuple $(h, x, a, x')$, $\policyT(h) = x'$ and $\policyX(x, x') = a$, while all other actions $\policyX(x, g)$ can be chosen to achieve the shortest path between $x$ and $g$ to satisfy the low-level MDP optimality criterion.  Such HILMO policy exhibits the exact same behavior as the optimal policy $\policy\opt$, \emph{i.e.}, for all histories $h$ the equality $\policyX(x, \policyT(h)) = \policy\opt(h)$ holds.
This shows that any behavior which is optimal for the control problem can be represented as a HILMO policy.
To conclude, we need to show that there is no other HILMO policy that would be preferable to the one constructed according to the HILMO criteria of optimality for the low-level and high-level policies respectively.  This is trivial for the constructed low-level policy, which effectively already finds the shortest (i.e., one-step) path between two poses $x$ and $x'$, and for the constructed high-level policy, which shares the same optimality criterion as the original MOB-MOMDP.

When $\gamma < 1$, the HILMO top-level criterion and the MOB-MOMDP criterion apply slightly different forms of discounting.  However, for sufficiently large $\gamma$, the difference is small enough to ensure that the criteria are either equivalent or approximately equivalent, resulting in optimal or quasi-optimal HILMO policies.
\end{proof}

Note that the optimal HILMO policy constructed above executes at the smallest possible temporal scale, \emph{i.e.}, the high-level policy $\policyT$ selects goal poses which are directly adjacent to the agent's current pose, and the low-level policy $\policyX$ is able to reach such goal poses in a single timestep.  However, this does not preclude the existence of other optimal HILMO policies which execute at broader temporal scales, in which the high-level policy selects goal poses which require multiple timesteps to be reached.

This analysis is limited to \emph{deterministic} MOB-MOMDPs and does not intrinsically carry over to \emph{stochastic} MOB-MOMDPs.  In 
Appendix~\ref{app:stochasticity}, we extend the analysis, providing a highly stochastic MOB-MOMDP example which demonstrates issues with the HILMO approach and a lowly stochastic MOB-MOMDP, which demonstrates that such issues are minor if the stochasticity is also minor.  We argue that because most realistic MOB-MOMDPs have a relatively low amount of stochasticity, the HILMO approach should still be able to achieve theoretically quasi-optimal performance even in mildly stochastic MOB-MOMDPs.

%
%
%
%
%
%

\section{Related Work}
This section describes prior works on hierarchical reinforcement learning (HRL).

\vspace{5pt}
\noindent \textbf{HRL for MDPs.}
For discrete action spaces, several prior works \cite{ dietterich2000hierarchical, kulkarni2016hierarchical} addressed learning hierarchically. For continuous action spaces, HAC~\cite{levy2017learning} and HIRO~\cite{HIRO} proposed a hierarchical agent consisting of policies learned jointly in an off-policy manner. HIPPO \cite{HIPPO} is an on-policy hierarchical agent but focuses more on optimizing pre-trained skills for downstream tasks.

\vspace{5pt}
\noindent \textbf{HRL for POMDPs.}
Hierarchical Suffix Memory \cite{hernandez2001hierarchical} incorporated hierarchies with memories to solve navigation tasks under perceptual aliasing. Another two-level hierarchical agent \cite{le2018deep} used hand-crafted goals with policies learned independently. HQ-Learning \cite{wiering1997hq} solved specific POMDPs using a pre-specified number of sequentially chained reactive sub-agents. \cite{steckelmacher2018reinforcement} combined hardcoded memoryless options (temporally extended actions) to solve a navigation POMDP by conditioning each option on the previous one. In contrast to these works, which use hardcoded bottom-level policies, hand-crafted goals, and have rigid structures, our hierarchical agent learns policies of all levels jointly, can handle continuous control tasks, and is supported by theoretical analysis.

\section{Experiments}
We perform experiments on continuous control tasks including two navigation and two manipulation domains implemented in MuJoCo \cite{todorov2012mujoco}. Below in Table~\ref{tab:domains}, we describe each domain and the corresponding state and observation.
\subsection{Domains}

\begin{figure}[t]
    \centering
    \includegraphics[width=1.0\linewidth]{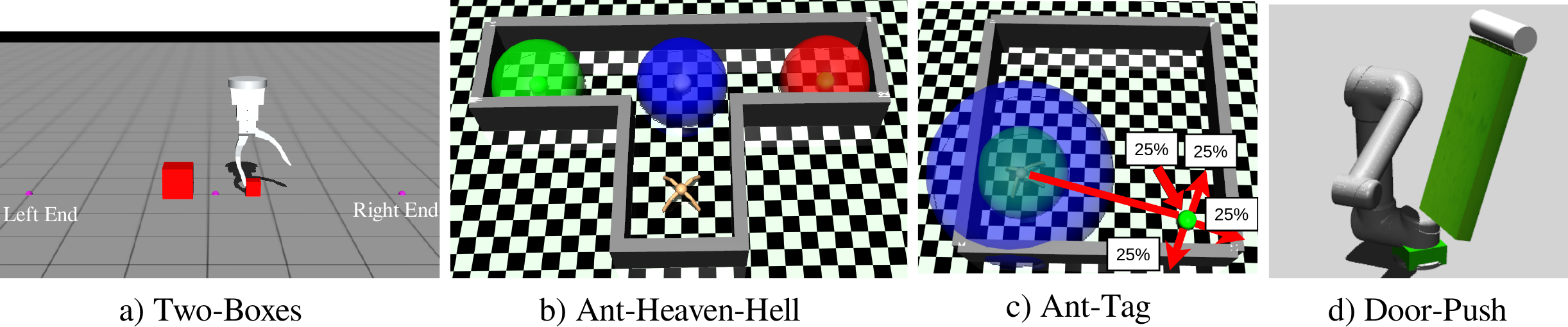}
    \caption{Four continuous control domains in MuJoCo~\cite{todorov2012mujoco} to perform experiments.}
    \label{fig:domains}
\end{figure}

\noindent \textbf{Two-Boxes. }A finger is velocity-controlled (dim$(\mathcal{A})=1$) on a 1D track to perform a dimension check of two boxes (Fig.~\ref{fig:domains}a). Since the finger is always compliant, it will be deflected from the vertical axis when it glides over a box. The agent observes the finger's position and angle (dim$(\Omega)=2$) but not the positions of the two boxes. Therefore, an optimal agent must localize \textit{both} boxes and determine their sizes using the history of angles and positions. When the two boxes have the same size, the agent must go to the right end (pink) to get a non-zero reward and otherwise to the left end. The agent receives a penalty if reaching wrong ends. An episode terminates whenever the two ends are reached, or lasts more than 100 timesteps.

\vspace{5pt}
\noindent \textbf{Ant-Heaven-Hell. }An ant with four legs (dim$(\mathcal{A})=8$) moving in a 2D T-shaped world will receive a non-zero reward by reaching a green area (heaven) that can be on the left or the right corner (Fig.~\ref{fig:domains}b) of a junction. The ant receives a penalty when entering a red area (hell). When it stays in the blue ball, it can observe heaven's side (left/right/null). Here the observation includes the joints' angles \& velocities of the four legs and the side indicator (dim$(\Omega)=30$). The ant starts randomly around the bottom corner, and an episode terminates when heaven or hell is reached, or more than 400 timesteps have passed. An optimal agent must visit the blue region to observe heaven's side, memorize the side while going to heaven, and finally goes to heaven.

\begin{table}[t]
\centering
\caption{State, goal, and observation descriptions.  $x^g$ is the goal used by the bottom-level policy in the hierarchical agents (HILMO, HILMO-O, and HAC).}
\label{tab:domains}
\begin{tabular}{p{.22\linewidth}p{.75\linewidth}}
\toprule
\textbf{Domain} & \textbf{Description} \\
\midrule
\texttt{Two-Boxes} & $x, x^g$: finger positions, $y$: desired target position, $z$: finger angle \\
\midrule
\texttt{Ant-Heaven-Hell} & $x$: joint angles and velocities, and body position, $x^g$: ant body position, $y$: heaven position, $z$: heaven position or null \\
\midrule
\texttt{Ant-Tag} & $x$: joint angles and velocities, and body position, $x^g$: ant body position, $y$: opponent position , $z$: opponent position or null \\
\midrule
\texttt{Door-Push} & $x$: joint angles and velocities, $x^g$: target joint angles, $y$: push direction, $z$: door angle \\ \bottomrule
\end{tabular}
\end{table}

\vspace{5pt}
\noindent \textbf{Ant-Tag. }The same ant now has to search and ``tag'' a moving opponent by being sufficiently close to it (having the opponent inside the green area centered at the ant in Fig.~\ref{fig:domains}c) to get a non-zero reward. Both start randomly but not too close to each other. The opponent follows a fixed stochastic policy, moving a constant distance away from the ant 75\% of the time or staying otherwise. An observation includes the joints' angles \& velocities of four legs and the 2D coordinate of the opponent (dim$(\Omega)=31$), containing the opponent's position only when it is inside the visibility (blue) area centered at the ant. An episode terminates when the opponent is tagged, or more than 400 timesteps have passed.

\vspace{5pt}
\noindent \textbf{Door-Push. } A 3-DoF gripper (dim$(\mathcal{A})=3$) in 3D must successfully push a door to receive a non-zero reward (Fig.~\ref{fig:domains}d). The door, however, can only be pushed in one direction (front-to-back or vice versa), and the correct push direction is unknown to the agent. Here the agent can observe the joints' angles and velocities and the door's angle (dim$(\Omega)=7$). Starting each episode, the door is present to the gripper, initialized with a random pose. An optimal agent must experiment to determine the correct push direction. For example, when it fails to open the door in one direction, it must go to the other side of the door while memorizing the previous push direction that did not work, not to try that again. Therefore, an optimal agent must infer the correct push direction from the history of observations. An episode terminates when the door's angle is larger than a threshold or more than 400 timesteps have passed.

\subsection{Agents}
We consider the following hierarchical \textbf{(H)} and flat \textbf{(F)} agents:
\begin{itemize}
    \item \textbf{(H)} A two-level \textbf{HILMO} (ours) agent with goals $x^g$ described in Table~\ref{tab:domains}.
    \item \textbf{(H)} A two-level \textbf{HAC}~\cite{levy2017learning} agent with the same goal as in HILMO to show that recurrence is needed for a hierarchical agent to solve our domains.
    \item \textbf{(H)} \textbf{HILMO-O} (ours) same structure and goals as HILMO, but implemented using the HIRO~\cite{HIRO} framework with off-policy corrections (see Appendix \ref{app:HILMO-O} for the algorithm) to replace HATs. Another difference is that the bottom policy of HIRO originally receives dense instead of sparse rewards, \emph{i.e.}, $R^\mathcal{X} (x',x^g) = -d(x', x^g)$, hence HGTs are not used. Plus, there is no penalty for unachieved goals by the top policy (STTs are not used either). Like HAC, HIRO is a common hierarchical baseline for continuous control.
    \item \textbf{(F)} Soft Actor-Critic (\textbf{SAC})~\cite{haarnoja2018soft} with observations instead of states to show that even a strong flat agents cannot solve our domains without memory.
    \item \textbf{(F)} Recurrent Soft Actor-Critic (\textbf{RSAC})~\cite{yang2021recurrent} is a recurrent version of SAC.
    \item \textbf{(F)} Discriminative Particle Filter Reinforcement Learning (\textbf{DPFRL})~\cite{Ma2020Discriminative} is an on-policy agent that summarizes the history using a differentiable particle filter. DPFRL is one of state-of-the-art model-free POMDP methods. 
    \item \textbf{(F)} Variational Recurrent Model (\textbf{VRM})~\cite{han2019variational} is one of state-of-the-art off-policy model-based agents. It solves POMDPs by using a recurrent variational dynamic model and an SAC agent.
    \item \textbf{(F)} \textbf{Addition baselines} are explored in Appendix \ref{app:additional_baselines}. 
\end{itemize}

\begin{figure}[t]
    \centering
    \includegraphics[width=0.9\linewidth]{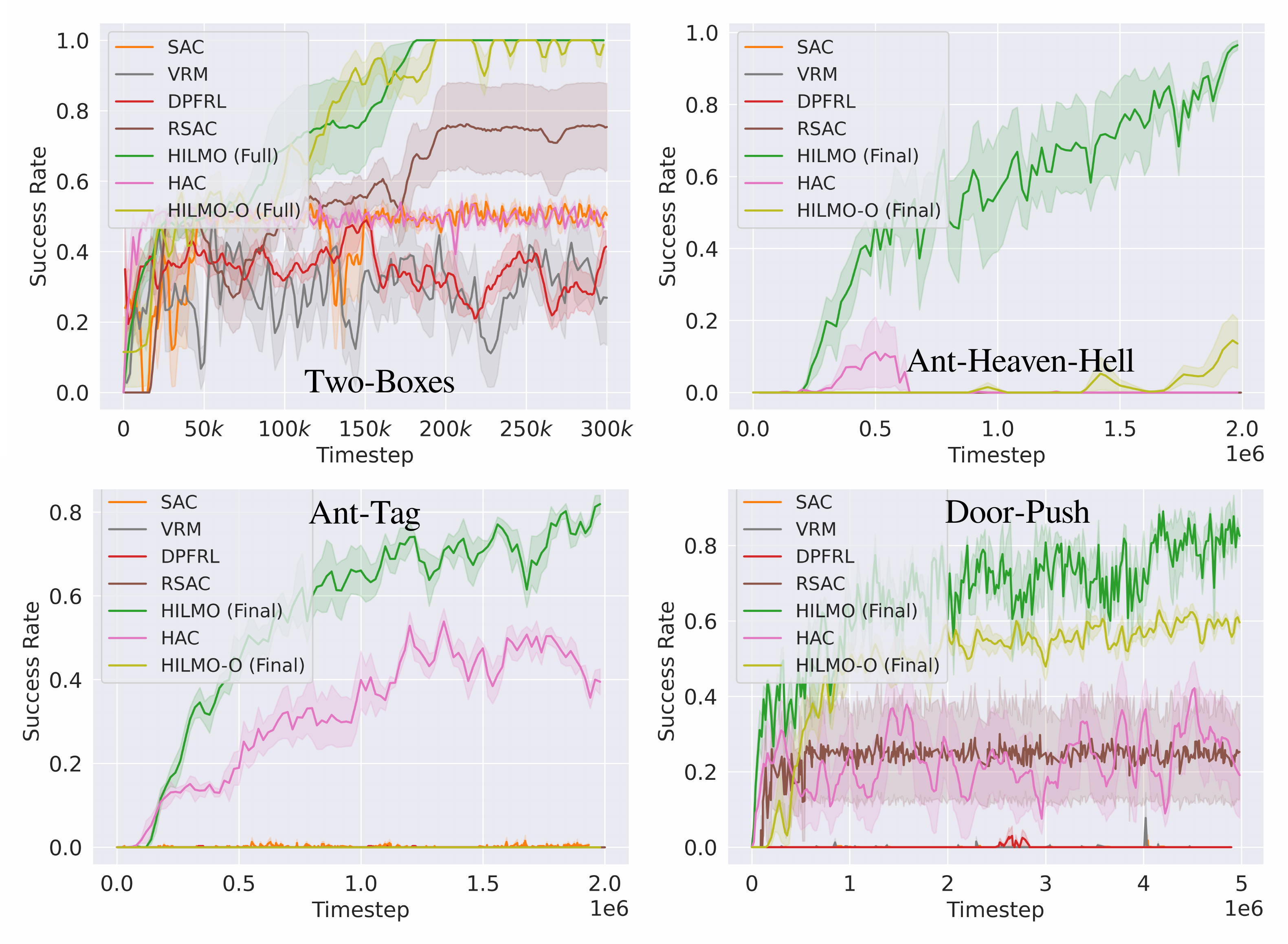}
    \caption{Success rate means and standard deviations (4 seeds).}
    \label{fig:all_success_rates}
\end{figure}

\subsection{Learning Performance}
We compare the success rates and the wall-clock training time of all agents. We alternate training and testing for all agents and compute the average success rates over 100 test episodes for every 2000 environment timesteps.  

\vspace{5pt}
\noindent \textbf{Success Rates. } For HILMO agents, we only report the best performance achieved with a specific strategy to create top-level observations (Full, Final, or Recurrent). From Fig.~\ref{fig:all_success_rates}, we can see that across all domains, HILMO consistently outperforms all flat baselines and is on par or better than HILMO-O. Moreover, HILMO is the only agent that can learn well in \texttt{Ant-Heaven-Hell} and \texttt{Ant-Tag}. 

\begin{wrapfigure}{R}{0.5\linewidth}
    \centering
    \includegraphics[width=1.0\linewidth]{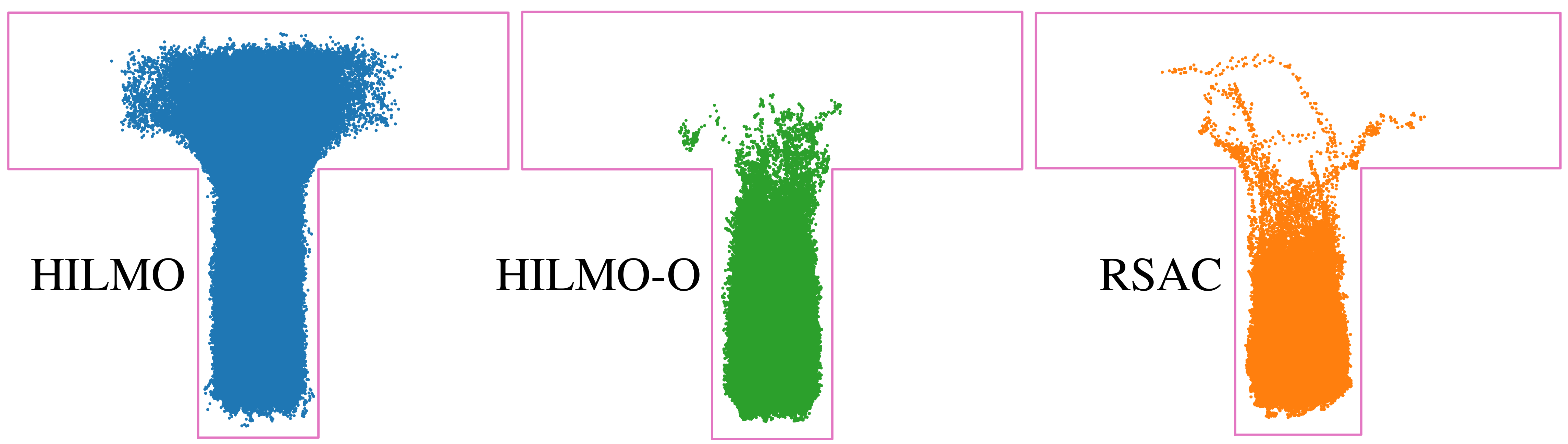}
    \caption{Coverage of HILMO, HILMO-O, and RSAC in \texttt{Ant-Heaven-Hell} after $500$k timesteps of training.}
    \label{fig:state_visitation_viz}
\end{wrapfigure}
\noindent \underline{Efficient Exploration.} We hypothesize that HILMO performs better because of a more effective exploration. To validate, we visualize the ant's positions in \texttt{Ant-Heaven-Hell} during 500k environment timesteps of training in Fig.~\ref{fig:state_visitation_viz}. Apparently, HILMO explores better thanks to high-level actions, covering the T-shaped space densely. HILMO-O also utilizes high-level actions, but its bottom policy rarely achieves given goals, which is detrimental to the final performance. The reason is that without penalizing unachieved goals, the top policy of HILMO-O is free to propose unrealistic goals (\emph{e.g.}, outside of the working space). This issue is also present in the original HIRO agents (see this \href{https://sites.google.com/view/efficient-hrl}{hyper-link}). In \texttt{Two-Boxes} and \texttt{Door-Push}, the bottom policy of HILMO-O performs better, therefore, its performance is relatively comparable to that of HILMO.

\noindent \underline{Other Baselines.} With no memory, SAC could not solve any domains as expected. Surprisingly, a memory-less HAC can perform quite well in \texttt{Ant-Tag} due to a well-trained bottom policy that sometimes can corner and tag the opponent. RSAC can only succeed in \texttt{Two-Boxes} for some seeds. It struggles to learn in the other tasks in which episodes last longer, and the reward sparsity will hinder learning more severely. DPFRL and VRM surprisingly perform poorly across domains. These methods have no mechanisms to deal with sparse rewards and were tested only on POMDPs that require no active information gatherings (\emph{e.g.}, flickering Atari games, or locomotion tasks with hidden velocities or positions).

\begin{table}[t]
\centering
\setlength{\tabcolsep}{1em}
\caption{Wall-clock training time in \textit{hours} for selected agents in Fig.~\ref{fig:all_success_rates}.}
\label{tab:all_hours}
\begin{tabular}{@{}lcccr@{}}\toprule
\textbf{Domain} & \textbf{RSAC} & \textbf{HILMO-O} & \textbf{HILMO} \\ \midrule
\texttt{Two-Boxes} & \multicolumn{1}{r}{3.31 $\pm$ 0.2 } & \multicolumn{1}{r}{\textbf{0.96} $\pm$ \textbf{0.2} } & \multicolumn{1}{r}{1.04 $\pm$ 0.3} \\
\texttt{Ant-Heaven-Hell} & \multicolumn{1}{r}{71.02 $\pm$ 0.3 } & \multicolumn{1}{r}{11.31 $\pm$ 0.2} & \multicolumn{1}{r}{\textbf{4.69} $\pm$ \textbf{0.3} }\\
\texttt{Ant-Tag} & \multicolumn{1}{r}{68.25 $\pm$ 0.4 } & \multicolumn{1}{r}{12.51 $\pm$ 0.3} & \multicolumn{1}{r}{\textbf{6.41} $\pm$ \textbf{0.3}}\\
\texttt{Door-Push} & \multicolumn{1}{r}{112.7 $\pm$ 0.4 } & \multicolumn{1}{r}{26.10 $\pm$ 0.2}& \multicolumn{1}{r}{\textbf{23.2} $\pm$ \textbf{0.2} }\\
\bottomrule
\end{tabular}
\end{table}

\vspace{5pt}
\noindent \textbf{Wall-clock Training Time. } To measure the time fairly, we train only one experiment at a time on the same CPU (Intel i7-8700K 3.7GHz with 12 processors) and GPU (Nvidia GeForce GTX 1080 8GB). Moreover, we excluded other baselines that did not learn or use GPU in their implementations. Table~\ref{tab:all_hours} shows that our agents (HILMO and HILMO-O) take significantly less time to train than RSAC in all domains.  While they are relatively comparable in \texttt{Two-Boxes}, HILMO-O is slower than HILMO in the remaining three domains. In these domains, while RSAC does not learn after days of training, HILMO can learn good policies in a reasonable time. The acceleration can be attributed to shorter top-level episodes for HILMO because each top-level observation summarizes several primitive observations. Moreover, the bottom policy trained in parallel can learn significantly faster due to full observability (see Appendix \ref{app:goal-ratios}).

\begin{figure}[t]
    \centering
    \includegraphics[width=0.9\linewidth]{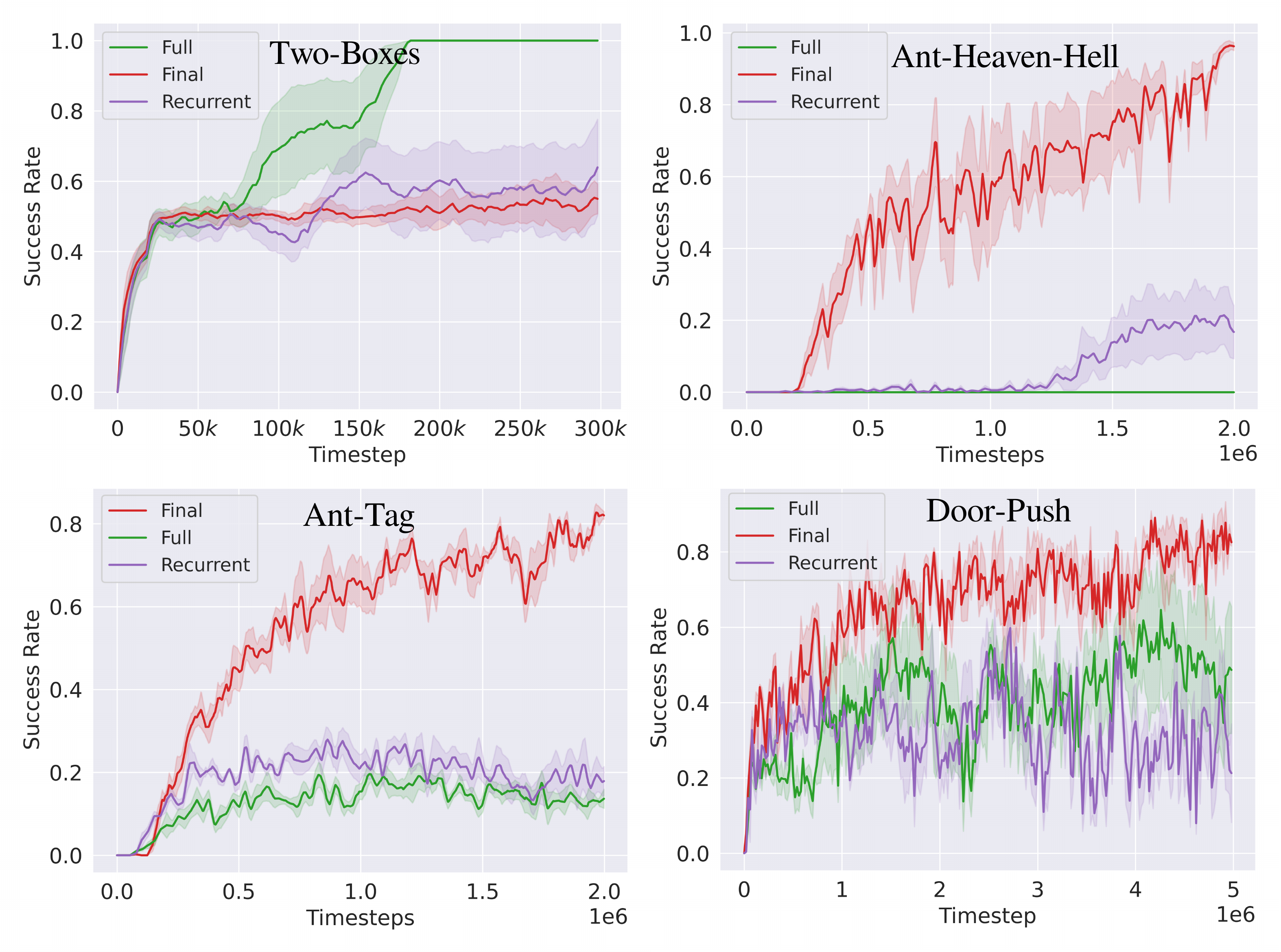}
    \caption{Comparing the success rates of different top-level observations (4 seeds).}
    \label{fig:summarizers}
    \vspace{-0.5cm}
\end{figure}

\subsection{Comparing Full, Final, and Recurrent}
The comparison is depicted in Fig.~\ref{fig:summarizers}. Generally, \specialword{Final} (red) is the best strategy, potentially due to more concise information. It dominates in three out of four domains, only be outperformed by \specialword{Full} (green) in \texttt{Two-Boxes}. \specialword{Final} does not perform well in \texttt{Two-Boxes} possibly because the proposed goals must be precisely on top of the two boxes for the final observation to contain angular changes. In contrast, other strategies are less restricted. \specialword{Recurrent} (purple) introduces another recurrent component into the agent, which seems to hinder learning.

\subsection{Robot Experiments for Two-Boxes}
The learned policy is deployed on a UR5e robot arm (Fig.~\ref{fig:policy_viz} left) with a specialized gripper~\cite{schwarm2019floating} that can control the compliance of its two fingers using hydrostatic actuators. However, we only use the left finger and keep it compliant at all times. We adjust the distance between the fingertip and the table so that deflected angles behave similarly in the simulation. We use wooden boxes of two sizes, making up four configurations, and perform three runs for each configuration with a random initial position of the finger between the two boxes. All runs are successful. Please refer to \url{https://youtu.be/KiVIBkdm0U8} for the demonstration of the policy as well as the policies learned in other domains.

\begin{wrapfigure}{R}{0.5\linewidth}
    \centering
    \includegraphics[width=1.0\linewidth]{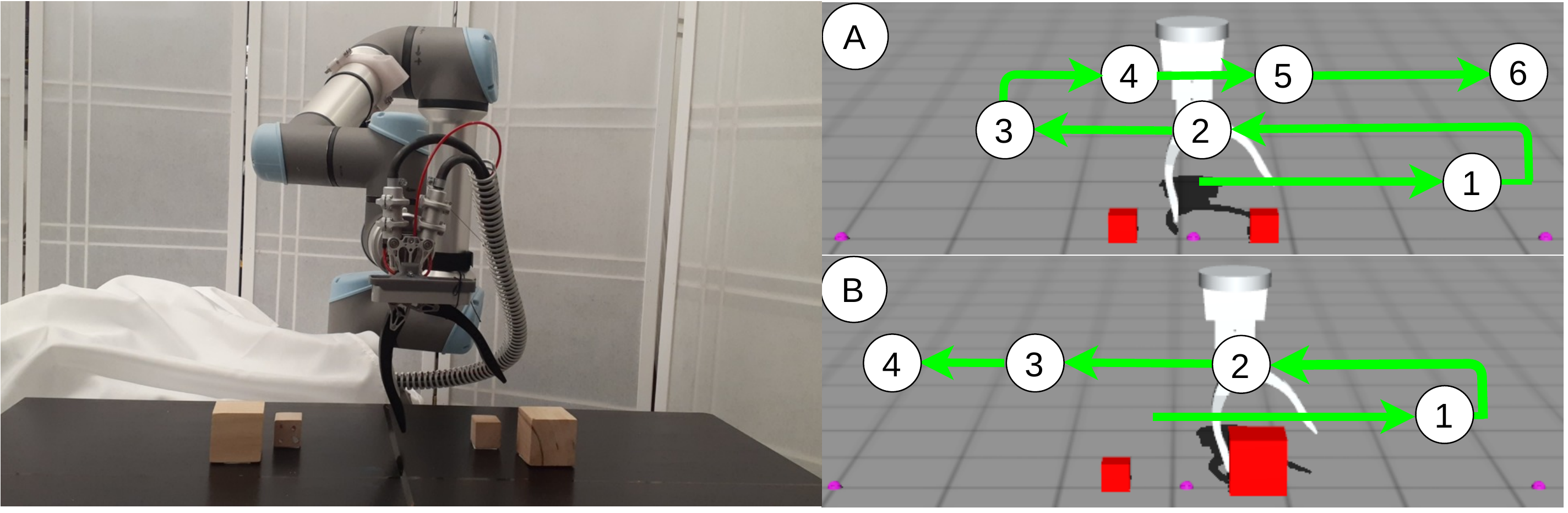}
    \caption{(Left) A UR5e robot and wooden boxes to deploy the learned policy. (Right) The policy learned in \texttt{Two-Boxes} with two distinct box configurations. White circles denote goals generated by the top-level policy. }
    \label{fig:policy_viz}
\end{wrapfigure}

\noindent \textbf{Policy Visualization.} The right side of Fig.~\ref{fig:policy_viz} illustrates the policy learned by our agent in \texttt{Two-Boxes} with two configurations of boxes: small-small and small-big. The learned policy generates both informative and rewarding actions based on the history of the left finger's angles and positions. The agent first goes right until passing a box; then, it backtracks until passing the other box. If the angle history indicates that the two boxes have the same size, the agent will go to the right end through goal 6 in case A to finish the task. Otherwise, it will go to the left end through goal 4 in case B. In \texttt{Ant-Heaven-Hell}, we also notice interesting patterns when visualizing the evolution of cell memories of a trained HILMO agent (see Appendix \ref{app:evolve}).

\section{Conclusion and Future Work}
\label{sect:conclusion}
This work introduces MOB-MOMDPs, a subclass of MOMDPs which can be found all across active robotic research areas, in which the agent's actions only have a direct effect on the fully observable component of the state.  We also introduce HILMO, a hierarchical agent that exploits the mixed observability assumptions of MOB-MOMDPs.  Our empirical evaluation shows that HILMO achieves improved learning performance and training time. A policy learned entirely in simulation is effectively deployed on real hardware. 

Even we focus on continuous control tasks, extending our approach for discrete control tasks is straightforward. Moreover, a hierarchical agent further allows state abstractions in which the input might be optimized for each level for more efficient learning. For instance, the top policy can propose optimal goals in \texttt{Ant-Tag} and \texttt{Ant-Heaven-H} \texttt{ell} without using the joints' angles and velocities.
\subsubsection{Acknowledgements}
This material is supported by the Army Research Office award W911NF20-1-0265 and the NSF grant 1816382.

%
%
%
%

\bibliographystyle{splncs04}
\bibliography{refs}


\clearpage
\setcounter{page}{1}
\setcounter{figure}{0}
\appendix
\section{HIerarchical reinforcement Learning under Mixed Observability (HILMO)}
\label{app:HILMO}
\begin{algorithm}[h!]
  \caption{HILMO}
  \label{alg:algo}
  \begin{algorithmic}[1]
    \State{\textbf{Constants:} Horizons: $H$ (whole agent), $H^T$ (top-level), $H^\mathcal{X}$ (bottom-level, \emph{i.e.}, $k$), goal testing probability: $\lambda$}
    \State {Replay buffers: $\mathcal{D}^T$, $\mathcal{D}^\mathcal{X}$; actors and critics $(\pi^T, Q^T), (\pi^\mathcal{X}, Q^\mathcal{X})$}
    \For {as many episodes}
    \State {Run-Top($\lambda$)}
    \State{Update $\pi^T, Q^T$ using RDPG \citeApp{app_heess2015memory} from episodes in $\mathcal{D}^T$}
    \State{Update $\pi^\mathcal{X}, Q^\mathcal{X}$ using DDPG \citeApp{app_lillicrap2015continuous} from transitions in $\mathcal{D}^\mathcal{X}$}
    \EndFor
    
    \Function{Run-Top}{\hspace{0.1mm}}
    \State{Empty storage $\tau \leftarrow \emptyset$, history $h^T \leftarrow h^T_{\text{init}}$, top observation $o^T \leftarrow  o^T_{\text{init}}$}
    \For{$H$ steps or until the environment solved}
    \State{$x^g \leftarrow \pi^T(h^T) + \text{exploration noise}$ }
    \Comment{Sample a goal for the bottom policy}
    \State{test-goal $\leftarrow$ True w.p. $\lambda$}
    \State{$[o^T, x^g, o'^{T}, r]$ = Run-Bottom($x^g$, test-goal)}
    \If{$x^g$ is tested and missed}
    \State{$\tau \leftarrow \tau \cup [o^T, x^g, o'^{T}, r=-H^T]$}     \Comment{Store a subgoal testing transition}
    \State{$\tau \leftarrow \tau \cup [o^T, x_{\text{met}}^{g}, o'^{T}, r]$} \Comment{Store a hindsight action transition}
    \Else
    \State{$\tau \leftarrow \tau \cup [o^T, x^g, o'^{T}, r]$}
    \EndIf
    \State{$h^T \leftarrow h^T \cup o'^{T}$}
    \Comment{Use recurrent module}
    \State{$o^T \leftarrow o'^{T}$}
    \EndFor
    \State{Process transitions in $\tau$ (Section \ref{ref:top-learn}) to create episodes and store in $\mathcal{D}^T$}
    \EndFunction
    
    \Function{Run-Bottom}{$x^g$, test-goal}
    \State {$x, z \leftarrow o$; $g \leftarrow x^g$}
    \Comment{Set observation and goal for the bottom level}
    \State {$E \leftarrow \emptyset$}
    \Comment{Empty storage for hindsight goal transitions (HGT)}
    \For{$H^\mathcal{X}$ attempts or until $g$ achieved}
    \State{$a \leftarrow \pi^\mathcal{X}(x, g)$ + exploration noise (if not test-goal)}
    \State{Execute $a$ in environment, observe $o'=(x',z')$ and environment reward $r$}
    \State{$\mathcal{D}^\mathcal{X} \leftarrow [x, a, x', r^\mathcal{X} \in \{ -1, 0\}, g]$}
    \Comment{Store transitions with intrinsic rewards}
    \State{$E \leftarrow [x, a, x', r^\mathcal{X}=\emptyset, g=\emptyset]$}
    \Comment{Store HGTs with empty rewards \& goals}
    \State{$x \leftarrow x'$}
    \EndFor
    \State{$\mathcal{D}^\mathcal{X} \leftarrow $ Complete HGTs}
    \Comment{Determine goals and rewards for HGTs in $E$}
    \State{Create next top-level observation $o'^{T}$ from all $o$-s using Full, Final, or Recurrent}
    \State{\textbf{return} $[o^T, x^g, o'^{T}, \sum r]$}
    \Comment{Return a top transition ($o^T$ from Run-Top)}
    \EndFunction
  \end{algorithmic}
\end{algorithm}
\noindent \underline{Parameters:}
\begin{itemize}
    \item $\lambda = 0.3$
    \item $[H, H^\mathcal{X}]$: [400, 20] (for \texttt{Ant-Tag}, \texttt{Ant-Heaven-Hell}, and \texttt{Door-Push}), [100, 12] (\texttt{Two-Boxes})
\end{itemize}

\clearpage
\section{Deep Deterministic Policy Gradient (DDPG)}
\label{app:ddpg}
\begin{algorithm}[htbp]
  \caption{DDPG algorithm (given a replay buffer of transitions and without target networks)}
  \begin{algorithmic}[1]
    \State {Initialize critic network $Q(s,a \mid  \theta^Q)$ and actor $\mu(s\mid \theta^\mu)$ with weights $\theta^Q$ and $\theta^\mu$}
    \State{Given a replay buffer $R$}
    \For {$M$ episodes}
    \State{Sample a minibatch of $N$ transitions $(s_i, a_i, r_i, s_{i+1})$ from $R$}
    \State{Calculate target $$y_i = r_i + \gamma Q\left(s_{i+1}, \mu(s_{i+1}\mid \theta^\mu)\mid \theta^Q \right)$$}
    \State {Update critic by minimizing the loss $$\mathcal{L} \approx \frac{1}{N} \sum_i \left( y_i - Q(s_i, a_i \mid  \theta^Q) \right)^2$$}
    \State{Update actor using sampled policy gradient:
    $$ \Delta_{\theta^\mu} J \approx \frac{1}{N} \Delta_a Q(s,a \mid \theta^Q) \big| _{s=s_i, a=\mu(s_i)} \Delta_{\theta^\mu} \mu (s \mid \theta^{\mu}) |_{s_i}$$}
    \EndFor
    
  \end{algorithmic}
\end{algorithm}
\noindent \underline{Implementation details:}
\begin{itemize}
    \item Actor network architecture: (FC-64 + ReLU) + (FC-64 + ReLU) + (FC-action-dim + Tanh)
    \item Critic network architecture:  (FC-64 + ReLU) + (FC-64 + ReLU) + (FC-1)
    \item Replay buffer: 100k transitions
    \item Batch size: 1024
    \item Optimizer: Adam~\citeApp{app_kingma2014adam} with a learning rate of 0.001 and other default parameters
\end{itemize}

\clearpage
\section{Recurrent Deterministic Policy Gradient (RDPG)}
\label{app:rdpg}
\begin{algorithm}[h!]
  \caption{RDPG algorithm (given a replay buffer of episodes)}
  \begin{algorithmic}[1]
    \State {Initialize critic network $Q^\omega(a_t, h_t)$ and actor $\pi^\theta(h_t)$ with parameters $\omega$ and $\theta$}
    \State {Initialize target networks $Q^{\omega'}$ and actor $\pi^{\theta'}$ with parameters $\omega' \leftarrow \omega $ and $\theta' \leftarrow \theta$}
    \State{Given a replay buffer $R$ of episodes}
    \For {$M$ episodes}
    \State{Sample a minibatch of $N$ episodes from $R$}
    \State{Construct histories $h_t^i = (o_1^i, a_1^i, \dots, a_{t-1}^i, o_t^{i})$}
    \State {Compute target values for each sample episode $(y_1^i, \dots, y_T^i)$ using the recurrent target networks
    $$y_t^i = r_t^i + \gamma Q^{\omega'}(h_{t+1}^i, \pi^{\theta'}(h_{t+1}^i))$$}
    \State {Update critic (using back-propagation through time)
    $$\Delta \omega = \frac{1}{NT}\sum_i \sum_t \left(y_t^i - Q^\omega(h_t^i, a_t^i)\right) \frac{\partial Q^\omega(h_i^t, a_t^i)}{\partial \theta}$$}
    \State{Update actor
    $$\Delta \theta = \frac{1}{NT}\sum_i \sum_t \frac{\partial Q^\omega \left(h_t^i, \pi^\theta(h_t^i)\right) }{\partial a} \frac{\partial \pi^\theta(h_i^t)}{\partial \theta}$$}
    \State{Update actor and critic using Adam \citeApp{app_kingma2014adam}}
    \State{Update target networks
    \begin{align*}
        \omega' &\leftarrow \tau \omega + (1-\tau)\omega' \\
        \theta' &\leftarrow \tau \theta + (1-\tau)\theta'
    \end{align*}
    }
    \EndFor
  \end{algorithmic}
\end{algorithm}
\noindent \underline{Implementation details:}
\begin{itemize}
    \item Actor network architecture: (LSTM-64) + (FC-64 + ReLU) + (FC-action-dim + Tanh)
    \item Critic network architecture: (LSTM-64) + (FC-64 + ReLU) + (FC-1)
    \item Replay buffer: from 5k-10k episodes
    \item Batch size: 256
    \item Optimizer: Adam~\citeApp{app_kingma2014adam} with a learning rate of 3e-4 and other default parameters
\end{itemize}

\clearpage
\section{Different Design Choices}
\label{app:design_choice}
\noindent \textbf{GRU v.s. LSTM.} We compare the performance of HILMO in \texttt{Ant-Tag} and \texttt{Two-Boxes} when using GRU \citeApp{app_cho2014learning} and LSTM \citeApp{app_hochreiter1997long} in Fig.~\ref{fig:gru_lstm}. Using GRU results in a slower speed of learning in \texttt{Ant-Tag}, but there is no major difference in \texttt{Two-Boxes}.

\begin{figure}[htbp]
    \centering
    \includegraphics[width=0.85\linewidth]{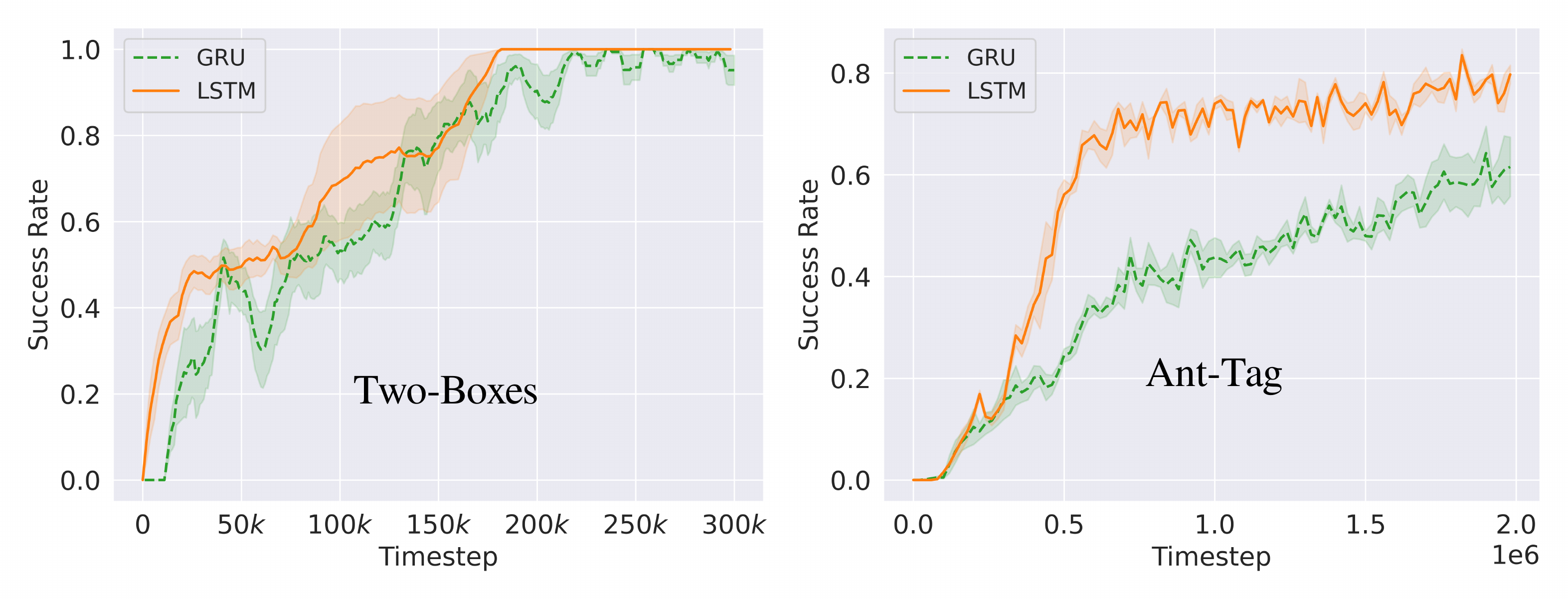}
    \caption{Performance comparison of HILMO when using LSTM and GRU for the top policy (4 seeds).}
    \label{fig:gru_lstm}
\end{figure}
\noindent \textbf{RDPG v.s. RTD3 for Top Policy.}
We compare the performance when using RDPG and RTD3 to learn the top-level policy in \texttt{Ant-Tag} and \texttt{Two-Boxes} in Fig.~\ref{fig:rtd3_rdpg}. RDPG outperformed RTD3 in the two domains, therefore we chose RDPG which is simpler to implement.
\begin{figure}[htbp]
    \centering
    \includegraphics[width=0.85\linewidth]{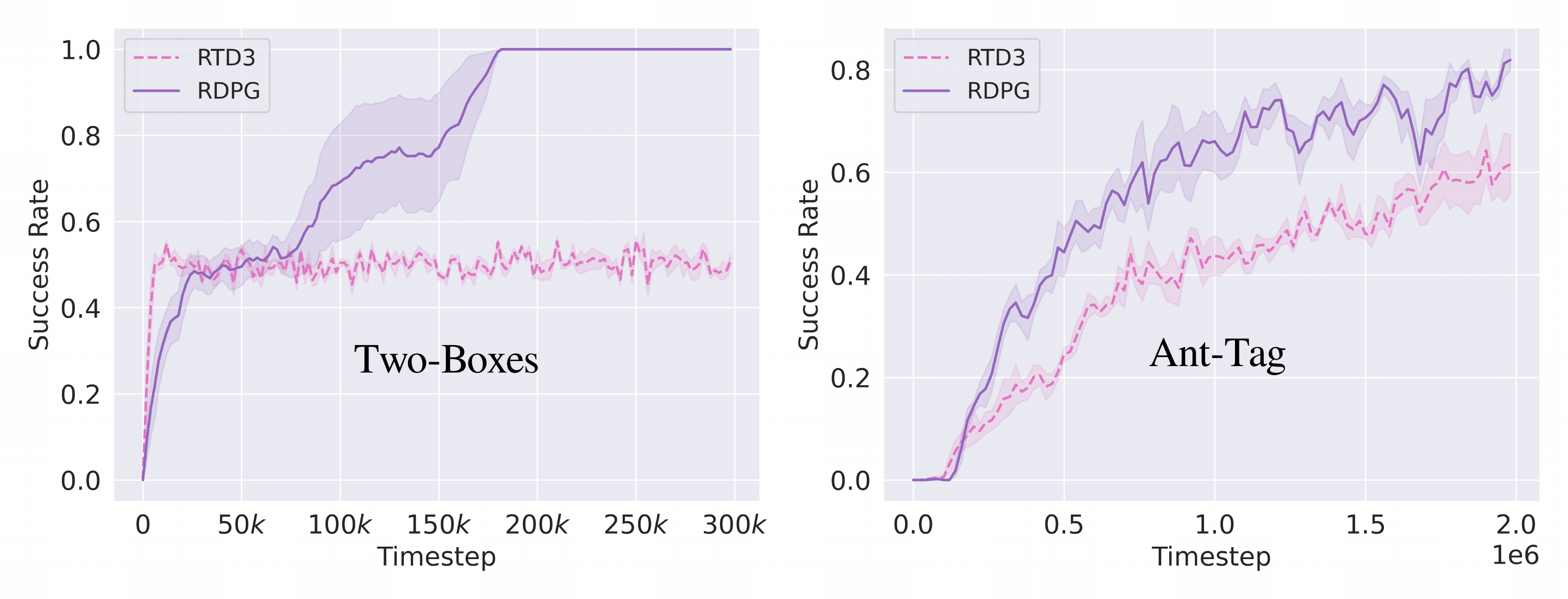}
    \caption{Using RTD3 and RDPG to learn the top policy (4 seeds).}
    \label{fig:rtd3_rdpg}
\end{figure}

\clearpage
\section{Stochastic MOB-MOMDPs and HILMO Optimality}
\label{app:stochasticity}
Theorem \ref{thm:optimality} assumes \emph{deterministic} MOB-MOMDPs, \emph{i.e.}, that $T^\mathcal{X}$ is deterministic, while $T^\mathcal{Y}$ is free to be stochastic (see Section~\ref{sect:mob-momdp}).  Here we provide a highly stochastic counterexample that shows why such deterministic transitions are necessary to guarantee the optimality of the HILMO approach.  Finally, we will argue that small levels of stochasticity (as might be found in more realistic navigation tasks) do not constitute a significant concern and should still imply that the HILMO approach is quasi-optimal.

\definecolor{color1}{RGB}{100,10,10}
\definecolor{color2}{RGB}{10,100,10}
\definecolor{color3}{RGB}{10,10,100}

\begin{figure}
\centering
\begin{subfigure}{0.45\textwidth}
\begin{tikzpicture}[
scale=0.9,
xnode/.style={circle, draw=gray, fill=gray!5, very thick, minimum size=7},
anode/.style={minimum size=0, inner sep=0mm},
edge/.style={->, very thick},
]

\node[xnode] (x) at (0, 0) {$x$};

\node[anode] (a1) at (1.5, 1.5) {$a_1$};
\node[anode] (a2) at (1.5, 0) {$a_2$};
\node[anode] (a3) at (1.5, -1.5) {$a_3$};

\node[xnode] (x1) at (4, 1.5) {$x'_1$};
\node[xnode] (x2) at (4, 0) {$x'_2$};
\node[xnode] (x3) at (4, -1.5) {$x'_3$};

\node (r1) at (5, 1.5) {$r=1$};
\node (r2) at (5, 0) {$r=1$};
\node (r3) at (5.1, -1.5){$r=-1$};

\draw[edge,color1] (x) to[out=20,in=180] (a1) to[out=0,in=180] node[right,above,pos=0.1] {50\%} (x1);
\draw[edge,color1]                       (a1) to[out=340,in=160] node[right,below,pos=0.1] {50\%} (x2);

\draw[edge,color2] (x) to[out=0,in=180] (a2) to[out=0,in=180] node[above,pos=0.1] {50\%} (x2);
\draw[edge,color2]                      (a2) to[out=340,in=160] node[below,pos=0.1] {50\%} (x3);

\draw[edge,color3] (x) to[out=340,in=180] (a3) to[out=0,in=180] node[below,pos=0.1] {50\%} (x3);
\draw[edge,color3]                        (a3) to[out=20,in=200] node[above,pos=0.05] {50\%} (x1);

\end{tikzpicture}
\caption{High stochasticity.}\label{fig:example:high_stochasticity}
\end{subfigure}
\begin{subfigure}{0.45\textwidth}
\begin{tikzpicture}[
scale=0.9,
xnode/.style={circle, draw=gray, fill=gray!5, very thick, minimum size=7},
anode/.style={minimum size=0, inner sep=0mm},
edge/.style={->, very thick},
]

\node[xnode] (x) at (0, 0) {$x$};

\node[anode] (a1) at (1.5, 1.5) {$a_1$};
\node[anode] (a2) at (1.5, 0) {$a_2$};
\node[anode] (a3) at (1.5, -1.5) {$a_3$};

\node[xnode] (x1) at (4, 1.5) {$x'_1$};
\node[xnode] (x2) at (4, 0) {$x'_2$};
\node[xnode] (x3) at (4, -1.5) {$x'_3$};

\node (r1) at (5, 1.5) {$r=1$};
\node (r2) at (5, 0) {$r=1$};
\node (r3) at (5.1, -1.5){$r=-1$};

\draw[edge,color1] (x) to[out=20,in=180] (a1) to[out=0,in=180] node[right,above,pos=0.1] {95\%} (x1);
\draw[edge,color1]                       (a1) to[out=340,in=160] node[right,below,pos=0.1] {\phantom{0}5\%} (x2);

\draw[edge,color2] (x) to[out=0,in=180] (a2) to[out=0,in=180] node[above,pos=0.1] {95\%} (x2);
\draw[edge,color2]                      (a2) to[out=340,in=160] node[below,pos=0.1] {\phantom{0}5\%} (x3);

\draw[edge,color3] (x) to[out=340,in=180] (a3) to[out=0,in=180] node[below,pos=0.1] {95\%} (x3);
\draw[edge,color3]                        (a3) to[out=20,in=200] node[above,pos=0.05] {\phantom{0}5\%} (x1);

\end{tikzpicture}
\caption{Low stochasticity.}\label{fig:example:low_stochasticity}
\end{subfigure}
\caption{Examples of stochastic transitions in $\mathcal{X}$-space.}
\end{figure}
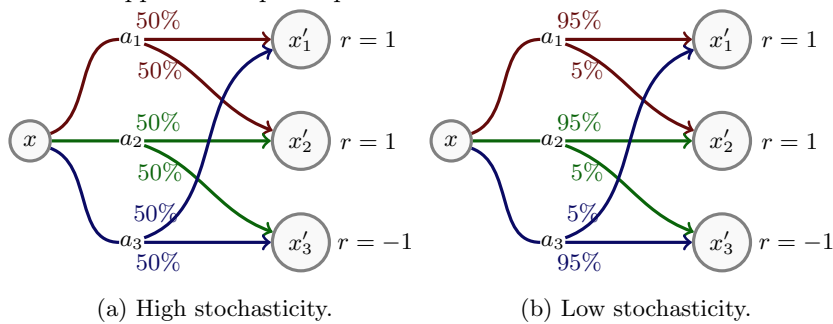

\noindent \textbf{High Stochasticity. } Consider the local dynamics depicted in Fig.~\ref{fig:example:high_stochasticity}, and assume that all other rewards are $0$, such that the optimality of a policy is exclusively determined by the behavior at $x$. In such a situation, it is desirable to reach either $x'_1$ or $x'_2$ and to avoid $x'_3$.  Given the stochastic dynamics in the example, this means that action $a_1$ is optimal, while $a_2$ and $a_3$ are suboptimal.  Note that $a_1$ does not guarantee to reach any specific $x'$;  it just guarantees that $x'_3$ is avoided. In the HILMO approach, the closest possible behavior would be to choose either $x'_1$ or $x'_2$ as the next ``pose''. However, if the top-level policy selects that it wants to reach $x'_1$, the issue arises that $x'_1$ can be reached by either $a_1$ or $a_3$, and neither action is better than the other for the goal of reaching $x'_1$. Therefore, the low-level policy is unable to prefer $a_1$ compared to $a_3$.  Similarly, if the top-level policy selects that it wants to reach $x'_2$, the issue arises that $x'_1$ can be reached by either $a_1$ or $a_2$, and neither action is better than the other for the goal of reaching $x'_1$. Therefore, the low-level policy is unable to prefer $a_1$ compared to $a_2$.  
Note that the optimal behavior of choosing $a_1$ to reach $x'_1$ or $x'_2$ \emph{can} be represented as a HILMO policy, it just is not intrinsically preferred to some other suboptimal behaviors.  In relation to Theorem~\ref{thm:optimality}, the issue is that under such stochasticity, the HILMO criteria for the low-level policy considers optimal and suboptimal behaviors to be equally valid.

This issue is potentially resolved by extending the HILMO approach to let the high-level policy select a whole subset of possible goals (or a smooth preference over goals) rather than a single goal, and let the low-level policy select actions which satisfy the aggregate goal as much as possible.  Such an extension is left for future work.  Instead, we continue the analysis, focusing on a low stochasticity example to show that the HILMO is able to handle low levels of stochasticity.

\noindent \textbf{Low Stochasticity. } Consider now the local dynamics depicted in Fig.~\ref{fig:example:low_stochasticity}, \emph{i.e.}, the same scenario described above, except that the transition probabilities associated with the actions have changed to be less stochastic.  In this case, action $a_1$, $a_2$, and $a_3$ are respectively more likely (albeit not guaranteed) to cause transitions to $x'_1$, $x'_2$, and $x'_3$.  A real-world analogy for such a situation could be a robot moving imperfectly through an environment due to minor wheel slippage.  In such a situation, if the top-level policy selects that it wants to reach $x'_1$, then the low-level policy has a concrete reason to prefer $a_1$ over $a_3$, i.e., that $a_1$ is more likely to transition to $x'_1$.  On the other hand, if the top-level policy selects that it wants to reach $x'_2$, then the low-level policy has a concrete reason to prefer $a_2$ over $a_1$, meaning that there is still a chance of reaching $x'_3$.  However, the top-level policy can take the low-level behavior into account and learn that choosing $x'_1$ as a target is preferred to $x'_2$ because choosing $x'_1$ as target never leads to negative rewards.  Therefore, optimal behavior is still guaranteed to be valued better than non-optimal behavior by the HILMO criteria.

We argue that, in practice, realistic motion-based tasks such as the ones modeled by MOB-MOMDPs tend to have low motion stochasticity rather than high motion stochasticity.  It follows that the HILMO approach should still be able to achieve quasi-optimal or even optimal behavior, even with low levels of stochasticity.

\clearpage
\section{HILMO with Off-Policy Corrections (HILMO-O)}
\label{app:HILMO-O}
\begin{algorithm}[h!]
  \caption{HILMO-O}
  \begin{algorithmic}[1]
    \State{\textbf{Constants:} Horizons: $H$ (whole agent), $H^\mathcal{X}$ (bottom, \emph{i.e.}, $k$)}
    \State {Replay buffers: $\mathcal{D}^T$, $\mathcal{D}^\mathcal{X}$; actors and critics $(\pi^T, Q^T), (\pi^\mathcal{X}, Q^\mathcal{X})$}
    \For {as many episodes}
    \State {Run-Top()}
    \State{Update $\pi^T, Q^T$ using RDPG from Off-Policy-Correct(episodes in $\mathcal{D}^T$, $\pi^\mathcal{X}$)}
    \State{Update $\pi^\mathcal{X}, Q^\mathcal{X}$ using DDPG from transitions in $\mathcal{D}^\mathcal{X}$}
    \EndFor
    
    \Function{Run-Top}{\hspace{0.1mm}}
    \State{Empty storage $\tau \leftarrow \emptyset$, history $h^T \leftarrow h^T_{\textrm{init}}$, top observation $o^T \leftarrow  o^T_{\textrm{init}}$}
    \For{$H$ steps or until the environment solved}
    \State{$x^g \leftarrow \pi^T(h^T) + \textrm{exploration noise}$}    \Comment{Sample a goal for the bottom policy}
    \State{$[o^T, x^g, o'^{T}, r], \{x_{1:c}, a_{1:c}, g_{1:c}\}$ = Run-Bottom($x^g$)}
    
    \State{$\tau \leftarrow \tau \cup [o^T, x^g, o'^{T}, r$, off-policy correction info = $\{x_{1:c}, a_{1:c}, g_{1:c}\}]$}
    
    \State{$h^T \leftarrow h^T \cup o'^{T}$}
    \Comment{Use recurrent module}
    \State{$o^T \leftarrow o'^{T}$}
    \EndFor
    \State{$\mathcal{D}^T \leftarrow \mathcal{D}^T \cup \tau$}
    \EndFunction
    
    \Function{Run-Bottom}{$x^g$}
    \State {$x, z \leftarrow o$; $g \leftarrow x^g$; $c \leftarrow 0$}

    \For{$H^\mathcal{X}$ attempts or until $g$ achieved}
    \State{$a \leftarrow \pi^\mathcal{X}(x, g) + \textrm{ exploration noise}$ }
    \Comment{Sample a noisy action}
    \State{Execute $a$ in environment, observe $o'=(x',z')$ and environment reward $r$}
    \State{$g' \leftarrow x + g - x'$}
    \Comment{Update relative goal}
    \State{$\mathcal{D}^\mathcal{X} \leftarrow [x, g, a, r^\mathcal{X} = - \|g'\|_2, x', g']$}
    \Comment{Store transitions with intrinsic rewards}
    \State{$x, g \leftarrow x', g'$}
    \State{c += 1}
    \EndFor
    \State{Create next top-level observation $o'^T$ from all $o$-s using Full, Final, or Recurrent}
    \State{\textbf{return} [$o^T$, $x^g$, $o'^{T}$, $\sum r$], $\{x_{1:c}, a_{1:c}, g_{1:c}\}$}
    \EndFunction
    
    \Function{Off-Policy-Correct}{$\{\tau_i\}$,$\pi^{\mathcal{X}}$}
    \Comment{For top-level}
    \For{each transition in $\{\tau_i\}$}
    \State{$\tilde{x}^g = \max_{g_1} \prod_{t=1}^c \pi^{\mathcal{X}} (a_{t} \mid x_{t}, g_{t})$}
     \Comment{Find a goal that makes $\pi^{\mathcal{X}}$ take the same actions as its past version}
    \State{Replace transition with $[o^T, \tilde{x}^g, o'^{T}, r]$}
    \EndFor
     \State{\textbf{return} $\{\tau_i\}$}
    \EndFunction
  \end{algorithmic}
\end{algorithm}
\noindent \underline{Parameters:}
\begin{itemize}
    \item Number of goal candidates used in maximization: 10 (as in HIRO~\citeApp{app_HIRO})
    \item Bottom reward scale: 1. Top reward scale: 1 (0.1 was used for HIRO; due to a different reward function)
    \item $[H, H^\mathcal{X}]$: [400, 20] (for \texttt{Ant-Tag}, \texttt{Ant-Heaven-Hell}, and \texttt{Door-Push}), [100, 12] (\texttt{Two-Boxes})
\end{itemize}

\clearpage
\section{Additional Baselines}
\label{app:additional_baselines}
We run additional experiments to measure the success rates (see Fig.~\ref{fig:additional-baselines}) of the following additional baselines:
\begin{itemize}
    \item Recurrent Proximal Policy Optimization (\underline{RPPO}) is the recurrent version of PPO \citeApp{app_schulman2017proximal} implemented in \citeApp{pytorchrl}
    \item Recurrent Twin Delayed Deep Deterministic (\underline{RTD3}) is the recurrent version of TD3 \citeApp{app_fujimoto2018addressing} from \citeApp{app_yang2021recurrent}
    \item Recurrent Deterministic Policy Gradient (\underline{RDPG}) \citeApp{app_heess2015memory} implemented in \citeApp{app_yang2021recurrent}
\end{itemize}
\begin{figure}[htbp]
    \centering
    \includegraphics[width=0.9\linewidth]{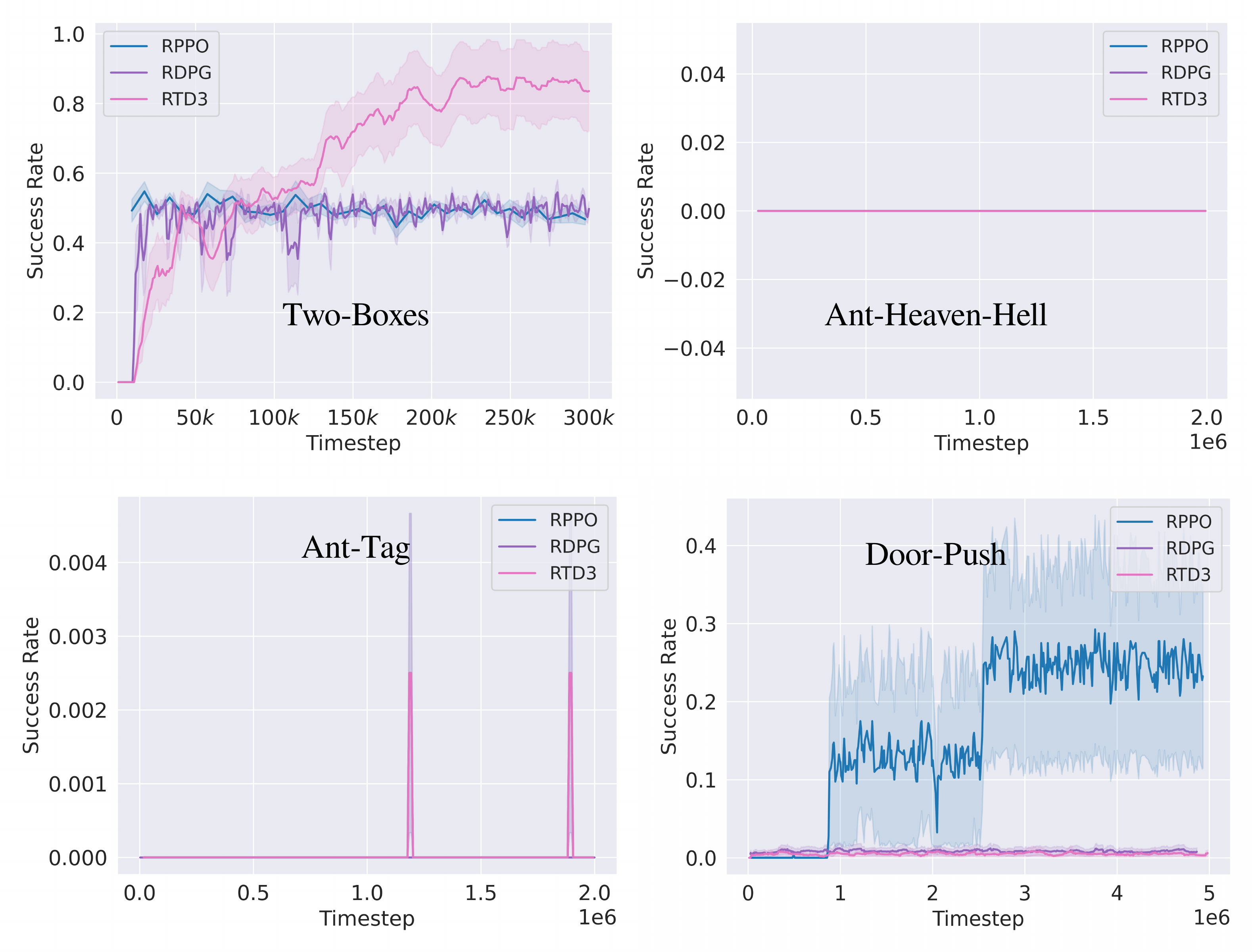}
    \caption{Success rates of additional baselines (4 seeds).}
    \label{fig:additional-baselines}
\end{figure}


\clearpage
\section{Goal Achieved Ratios of Trained Agents}
\label{app:goal-ratios}
Fig. \ref{fig:goal} shows the goal-achieved ratios and the success rates in \texttt{Two-Boxes} and \texttt{Ant-Heaven-Hell} of trained agents. As expected, the bottom policy with full observability learns quicker than the top policy with the goal-achieved ratio quickly climbing up, indicating the benefit of our approach. More interestingly, the bottom-level policy (red) does not have to be perfect for the whole agent to solve the given tasks satisfactorily. By inspecting the learned policies, we observe that the top-level policy will sometimes propose goals closer to the agent when it realizes that the last goal has not been achieved.
\begin{figure}[thb]
    \centering
    \includegraphics[width=0.85\linewidth]{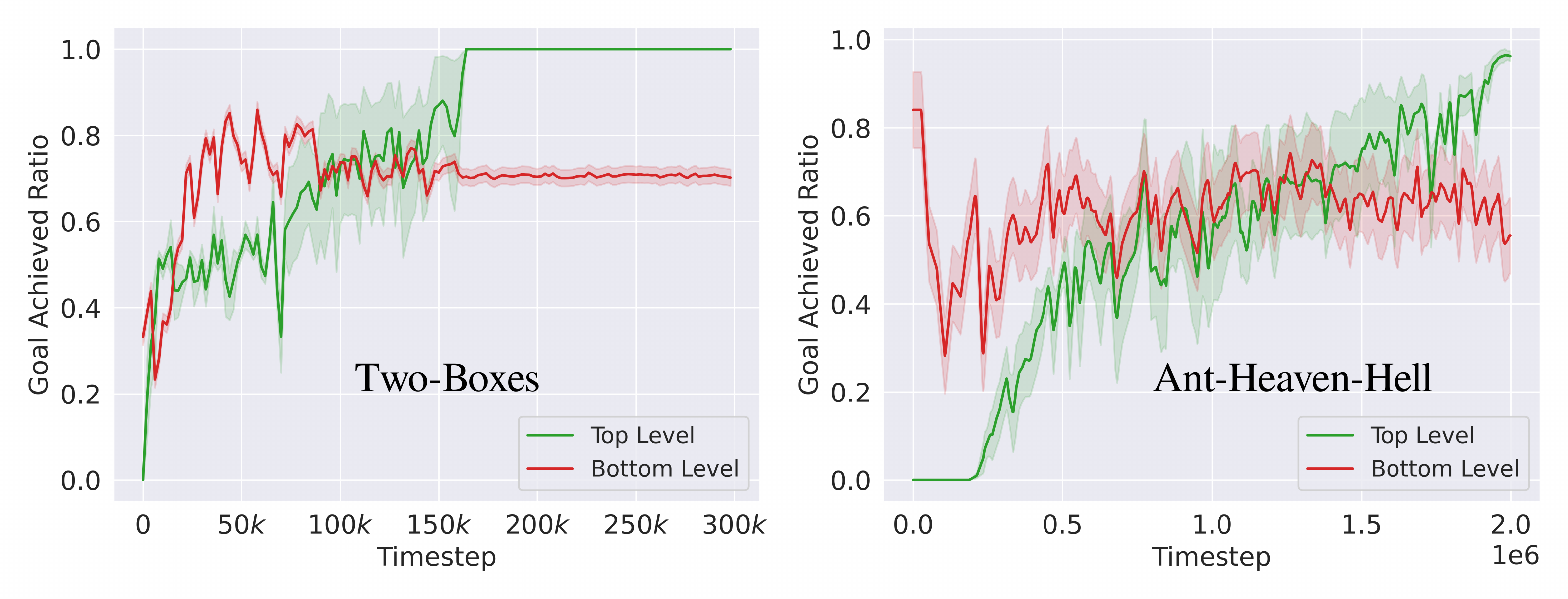}
    \caption{Goal-achieved ratios at two levels of fully trained agents (4 seeds).}
    \label{fig:goal}
\end{figure}

\section{Evolution of Memory Cells of a Trained Agent}
\label{app:evolve}
Fig.~\ref{fig:cell_evolve} shows the evolution of several memory cells of the LSTM inside the top-level actor in \texttt{Ant-Heaven-Hell} after training. We visualize over four episodes in which heaven is on the left (the left figure) and the right (the right figure). Apparently, memory cells behave differently depending on the side observed when the agent is inside the blue area (marked by shaded areas). The same behaviors generally repeat whenever the same side is observed.
\begin{figure}[thb]
    \centering
    \includegraphics[width=0.85\linewidth]{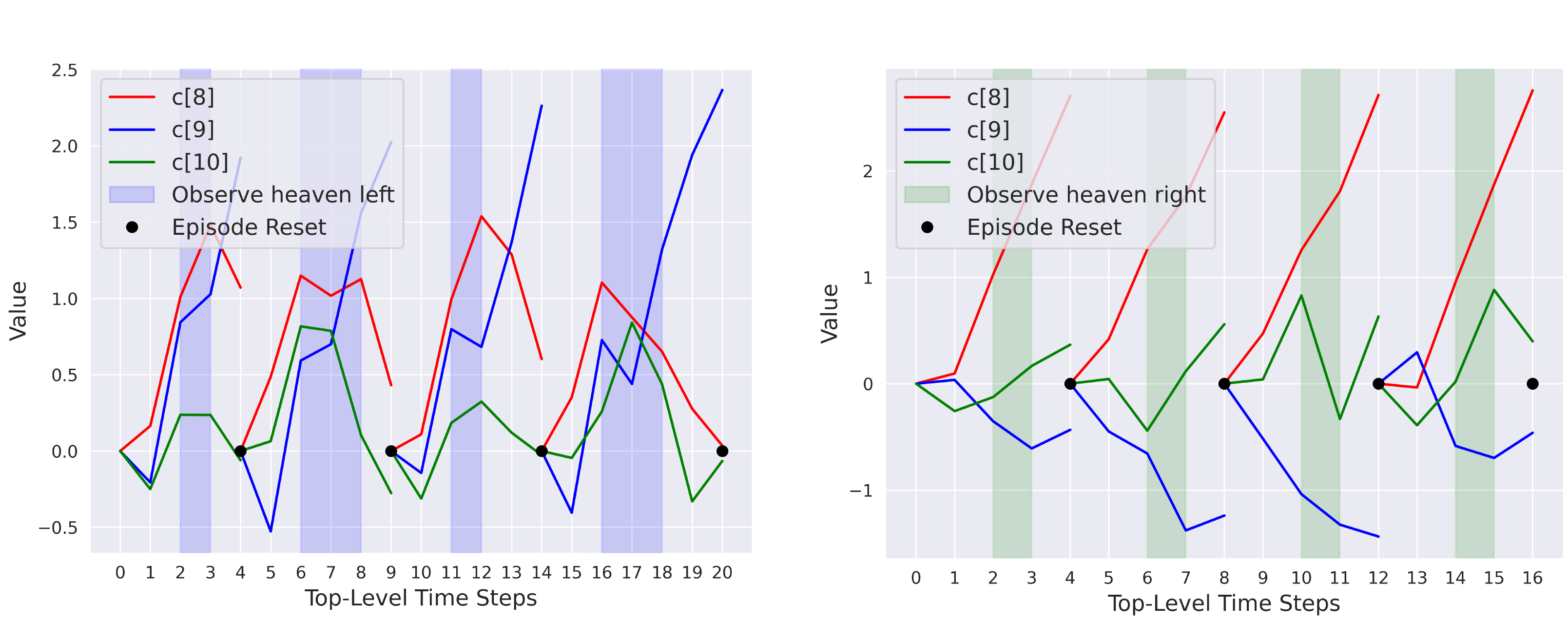}
    \caption{The memory cell's internal states $c[j]$ of the LSTM network of a trained top-level actor in \texttt{Ant-Heaven-Hell} during four episodes with heaven on the left (left figure) and on the right (right figure). The shaded areas mark when the agent is inside the blue area and can observe the side of heaven. }
    \label{fig:cell_evolve}
\end{figure}


\clearpage
\bibliographystyleApp{splncs04}
\bibliographyApp{app_refs}


\end{document}